\documentclass{article} 
\usepackage{iclr_conference,times}

\usepackage{amsmath,amsfonts,bm}

\def\eqref#1{equation~\ref{#1}}

\def\1{\bm{1}}

\DeclareMathAlphabet{\mathsfit}{\encodingdefault}{\sfdefault}{m}{sl}
\SetMathAlphabet{\mathsfit}{bold}{\encodingdefault}{\sfdefault}{bx}{n}

\usepackage{hyperref}
\usepackage{url}
\usepackage{amsthm}
\usepackage{amssymb}
\usepackage{algorithm}
\usepackage{algpseudocode}
\usepackage{graphicx}
\usepackage{enumitem}
\usepackage{subcaption}
\usepackage{booktabs}
\usepackage{tabularx}
\usepackage{xurl}
\usepackage{geometry}
\usepackage{array}
\usepackage{longtable}
\usepackage{booktabs}

\newcolumntype{L}[1]{>{\raggedright\arraybackslash}p{#1}}

\usepackage[section]{placeins}

\newtheorem{proposition}{Proposition}
\newtheorem{definition}{Definition}

\usepackage{tikz}
\tikzset{
  every node/.style={font=\scriptsize},
  box/.style={
    draw, rounded corners=2pt, line width=0.30pt,
    align=center, inner xsep=3.5pt, inner ysep=2.6pt,
    minimum height=5.4mm
  },
  group/.style={draw, rounded corners=3pt, line width=0.30pt, inner sep=4.5pt},
  gtitle/.style={font=\bfseries\scriptsize, inner sep=1pt},
  arrow/.style={
    -{Latex[length=1.8mm]}, line width=0.35pt,
    rounded corners=2pt,
    shorten >=2pt, shorten <=2pt
  }
}

\title{Integrable Elasticity \\ via Neural Demand Potentials}

\author{
Carlos Heredia\thanks{e-mail address: carlosherediapimienta@gmail.com}\hspace{0.5cm}\&\hspace{0.5cm}Daniel Roncel\thanks{e-mail address: danielronceldiaz@gmail.com} \\
\textit{IAMM Research, Department of Applied Artificial Intelligence} \\
\textit{DAMM, Carrer del Rossell\'o 515, 08025 Barcelona, Catalonia, Spain}
}

\iclrfinalcopy
\begin{document}

\maketitle

\begin{abstract}
We propose the Integrable Context-Dependent Demand Network (ICDN), a demand-first neural model for multiproduct retail demand. The model learns log-demand as a smooth, context-conditioned function of log-prices, allowing elasticities to be derived exactly from the learned demand surface. On the Dominick’s beer dataset, ICDN improves out-of-sample generalization over a directed log-log benchmark and yields more stable, economically plausible elasticity estimates, especially for weakly identified cross-price effects.
\end{abstract}

\noindent \textbf{Keywords:} Deep Learning, Dynamic Pricing, Price Elasticity.

\section{Introduction}\label{sec:intro}

A central input to pricing decisions is a simple but difficult question: how does demand respond when prices change? In practice, this question is answered through elasticities---local log-sensitivities of demand with respect to price \cite{Asano_2012}---which are the natural primitives for scenario simulation, revenue optimization, and promotion planning. However, learning stable and reproducible elasticities at scale remains challenging. Retail demand is heterogeneous across stores and products, nonstationary over time, and shaped by promotions and assortment effects. At the same time, prices are often correlated across stock-keeping units (SKUs) and governed by business rules, making it easy for flexible models to fit demand while producing unstable or economically implausible elasticities.

From a mathematical perspective, elasticities are derivatives of the demand map \cite{Anthony_Biggs_2024}. As a result, small-scale irregularities in an estimated demand surface can be amplified into large fluctuations in elasticities. The challenge is more pronounced in the multiproduct setting, where the elasticity matrix couples all SKUs---or, in practice, at least the most relevant competitors. Additionally, cross-price effects are especially difficult to identify from observational price variation, as they are often weaker than own-price responses and can be blurred by correlated pricing decisions, promotional activity, and other sources of variation.

In applied work, elasticities are therefore typically computed from an estimated demand model, ranging from simple log-log OLS benchmarks \cite{Baddeley_Barrowclough_2009} to modern deep-learning predictors \cite{AcunaAgost2021PriceElasticityDeepLearning}. From this perspective, it is natural to seek a single structured demand model that is differentiable and analytically tractable, so that elasticities can be obtained exactly as derivatives of the same underlying surface \cite{Berry1995, BanksBlundellLewbel1997}.

Guided by this demand-first view, we introduce the Integrable Context-Dependent Demand Network (ICDN), a structured neural demand model that represents log-demand as a smooth function of log-prices and contextual covariates, so that own- and cross-price elasticities arise as exact derivatives of the same underlying surface. This demand-first construction is not unique to ICDN: any sufficiently smooth model of log-demand induces an integrable elasticity field when elasticities are defined as its Jacobian. The contribution of ICDN is to implement this principle in a form tailored to high-dimensional retail pricing, combining an explicit price basis, analytic spline derivatives, economic regularization, and sparse directed cross-price interactions.

Structurally, ICDN decomposes price effects into interpretable linear components and flexible spline-based nonlinear refinements. Own-price responses are represented by a locally linear log-log term \(\beta_{ii}(x)u_i\) together with a nonlinear spline component \(w_{ii}(x)^\top B_i(u_i)\). Directed cross-price responses are modeled through a linear component \(\beta_{ij}(x)u_j\), a nonlinear univariate refinement \(w_{ij}(x)^\top B_j(u_j)\), and a spline-spline interaction \(B_i(u_i)^\top U^{(ij)}(x)B_j(u_j)\) that captures nonlinear relative price-positioning effects between products \(i\) and \(j\).

The coefficients attached to these price-response structures are not fixed globally: they vary with the observed retail context. A shared encoder maps the contextual information associated with each SKU into a product-specific latent representation \(h_i\). Own-price parameters are derived from \(h_i\), while cross-price parameters are constructed from ordered pairs \((h_i,h_j)\) and modulated by attention weights over selected product pairs. This ordered-pair construction does not impose symmetry: \(E_{ij}\) and \(E_{ji}\) are learned independently, allowing directional substitution or complementarity patterns across products. Because the spline bases admit closed-form derivatives, the resulting elasticities remain analytically computable without dense Jacobian or Hessian evaluations.

In our empirical application, we evaluate ICDN on the Dominick's Finer Foods scanner data for the beer category, a weekly retail scanner panel covering store-specific sales, prices, and promotional activity \cite{kilts_dominicks_manual_2018,kilts_dominicks_dataset}. To make demand responses comparable across products, we express quantity in liters sold and prices in price per liter, and work in log-log coordinates. In this setting, own-price elasticities are expected to be predominantly negative, a pattern that is both standard for this class of products and confirmed in our cleaned Dominick's sample \cite{ToroGonzalezMcCluskeyMittelhammer2014}.

Because the empirical application uses observational scanner data, the elasticities reported below should not be interpreted as fully identified causal effects of price interventions. They are local, context-conditioned derivatives of the fitted demand surface, evaluated under the current pricing regime. Accordingly, our empirical analysis focuses on predictive generalization, derivative coherence, and elasticity
stability, rather than on causal identification. For causal approaches to pricing, see, for instance, \cite{causal-pricing} and \cite{schultz2024causalforecastingpricing}.

Training balances predictive accuracy with economic regularity. We fit log-demand with a robust objective to reduce sensitivity to outliers and heteroskedastic noise, while regularizing the geometry of the learned demand surface to avoid artificial oscillations in the implied elasticities. In addition, we impose a soft penalty on positive own-price elasticities, reflecting the dominant empirical pattern of downward-sloping demand in this application, while still allowing local deviations when sufficiently supported by the data. Cross-price responses are also regularized through a soft elasticity-band penalty, which discourages unusually large cross-product effects unless they are supported by the demand-fit term.

Our contributions are:
\begin{itemize}
    \item We formulate multiproduct elasticity estimation as a demand-first problem: log-demand is learned as a smooth surface and integrable elasticities are obtained as derivatives of that surface.
    
    \item We present ICDN, a structured neural demand model designed to make this demand-first principle practical in high-dimensional retail settings. ICDN decomposes price responses into linear own-price effects, directed cross-price interactions, and spline-based nonlinear refinements, allowing flexible but analytically tractable demand surfaces.
    
    \item We exploit analytic derivatives of product-specific spline bases to compute own- and cross-price elasticities without relying on dense generic Jacobian or Hessian evaluations. This makes the derivative calculations transparent and scalable for multiproduct panels.
    
    \item We introduce SKU-specific contextual conditioning together with sparse attention-modulated directed interactions. Each product is encoded through a shared contextual encoder, while cross-price effects are constructed from ordered product pairs, yielding heterogeneous and directional cross-price responses.

    \item We evaluate the proposed architecture on retail demand data relative to a classical directed log-log benchmark, and analyze its implications for predictive performance, elasticity stability, reproducibility, and economic coherence.
\end{itemize}

\subsection{Related Work}
Our work addresses various aspects of the demand-estimation literature. A first aspect consists of classical structural demand systems, which specify a system of demand equations and derive elasticities from the estimated parameters. Within this tradition, the Almost Ideal Demand System (AIDS) of \cite{DeatonMuellbauer1980} models budget shares across goods as functions of prices and real total expenditure. It provides a flexible first-order approximation to consumer demand and offers a theoretical framework in which the standard restrictions of demand theory---adding-up, homogeneity, and Slutsky symmetry---can be imposed and tested. It was introduced as a tractable alternative to earlier systems such as the Rotterdam \cite{Theil1992} and translog models \cite{Christensen1975TranscendentalLU}. Quadratic extensions such as QUAIDS further increase flexibility by allowing expenditure shares to be quadratic in log expenditure, while preserving a utility-consistent structure for analyzing price responses and welfare effects \cite{BanksBlundellLewbel1997}. In parallel, the differentiated-products literature, exemplified by \cite{Berry1995}, developed structural models of demand and supply in oligopolistic markets, where substitution patterns arise from heterogeneous consumer preferences regarding product attributes. Moreover, equilibrium prices are determined by firms' marginal costs together with the markups implied by demand elasticities and strategic pricing. In this setting, elasticities are not estimated as free parameters, but are recovered from the estimated demand system. Building on this literature, we adopt the same demand-centered perspective.

A related literature estimates price responsiveness using simple log-log demand specifications. We use these models as our main empirical benchmark, given their widespread use in applied pricing and retail analytics and their practical advantages: they are easy to estimate, computationally fast, and highly interpretable: the coefficient on the product’s own price is the own-price elasticity, while coefficients on other products’ prices can be read directly as cross-price elasticities. Their practical value is further strengthened by the ease with which additional regressors can be incorporated. Their main limitation, however, is that they impose globally constant, or otherwise highly restricted price effects, and therefore struggle to capture heterogeneous and state-dependent substitution patterns in multiproduct retail environments  \cite{Baddeley_Barrowclough_2009}.

More recently, retail demand modeling has increasingly turned to machine-learning methods designed for high-dimensional settings. This shift reflects the idea that sales for individual SKUs are shaped by many nonlinear, lagged, cross-product, and store-specific effects that traditional models cannot adequately capture. In this spirit, \cite{Antipov2020InterpretableMLDemand} propose a high-dimensional retail demand framework with nearly 500 predictors and show that Gradient Boosting Machines outperform both Random Forests and Elastic Net models, while Shapley-value decompositions provide interpretable feature attributions. This motivation is closely aligned with ours: the complexity of modern retail demand extends well beyond a small set of linear price and promotion variables. Our contribution, however, is different. Rather than treating elasticity analysis as a secondary or post-hoc exercise, ICDN preserves an explicit demand-as-a-function-of-price representation, but allows the corresponding coefficients to depend on product characteristics and market context. Therefore, elasticities remain intrinsic, analytically recoverable objects of the model, while retaining the flexibility needed to accommodate high-dimensional retail environments.

A more recent line of work uses deep learning models directly for elasticity estimation. \cite{AcunaAgost2021PriceElasticityDeepLearning} show that price elasticities can be recovered from deep-learning-based choice models through automatic differentiation of a trained predictor. This shares the same basic motivation as our approach, namely, treating elasticities as derivatives of a differentiable (deep-learning) model rather than as separate estimated objects. However, because elasticities are obtained through numerical differentiation, this strategy can become computationally costly at scale and offers only limited direct control over the implied demand structure and its derivatives. Our approach addresses this limitation by specifying a structured demand surface whose derivatives are analytically available as explicit functions of the neural-network parameters.

Finally, our empirical setting connects to the literature on price elasticities in fast-moving consumer goods and, more specifically, beer demand. For aggregate beer demand, the empirical evidence consistently suggests that own-price elasticity is negative and inelastic. \cite{NELSON2014180} provides a meta-analysis of 191 beer price-elasticity estimates from 114 primary studies and reports an average elasticity of about $-0.20$ after correcting for heterogeneity, dependence, and publication bias. Similarly, \cite{ClementsMarianoVerikiosWong2022} summarize the existing evidence in their Table 1, reporting a mean of means of $-0.47$ and a mean of medians of $-0.37$ across relevant beer-demand studies, while their updated Australian estimates also indicate inelastic demand for the broad beer category. 

A related issue is the level of aggregation at which prices and quantities are measured. Using Universal Product Code (UPC) scanner data, \cite{RuhmEtAl2012} show that more reliable grocery-store price measures imply a relatively low beer elasticity, probably around $-0.3$, although they aggregate UPC information into beer-category prices instead of estimating demand separately for individual SKUs. By contrast, \cite{Srivastava2014} estimate an AIDS model for 12 disaggregated alcoholic beverage types using monthly Nielsen data and obtain much larger own-price elasticities, with beer-type elasticities ranging roughly from $-1.0$ to $-5.4$.

Closest to our empirical setting, \cite{ToroGonzalezMcCluskeyMittelhammer2014} use Dominick's supermarket scanner data, which we also exploit, to estimate demand for differentiated beer products. Although their underlying data contain UPC-specific information, their reported elasticities are aggregated by beer type-mass-produced, craft, and imported beer. They find highly inelastic own-price elasticities of approximately $-0.13$, $-0.21$, and $-0.22$, respectively, and cross-price elasticities across beer types that are essentially zero. This evidence motivates our empirical prior that own-price effects should be negative, while cross-price effects are likely to be weaker, more heterogeneous, and should therefore be learned more cautiously from the data. Our contribution is to move beyond category- or beer-type aggregates by recovering own- and cross-price elasticities directly for individual SKUs/UPCs.

Taken together, our work follows the demand-first logic of structural demand models, in which elasticities are derived from a single demand representation. We extend this logic to high-dimensional retail settings by specifying a flexible neural demand surface for individual SKUs/UPCs, where both own- and cross-price responses are allowed to vary with product characteristics and market context. Because the architecture is constructed so that these responses are analytically recoverable from the learned neural-network parameters, the resulting elasticities remain directly interpretable, regularizable, and suitable for large-scale multiproduct pricing applications.

The remainder of the paper is organized as follows. Section~\ref{Sec:elasticity-path} introduces notation and discusses elasticity computation from demand models. Section~\ref{sec:nn} presents our modeling approach, architecture, and training objective. Section~\ref{sec:experiments} reports empirical results and diagnostic analyses. Section~\ref{sec:conclusion} concludes with directions for future research.

\section{Elasticity Fields and Integrability}\label{Sec:elasticity-path}

In multiproduct pricing, demand is typically modeled as a function of prices and contextual information, while elasticities provide a compact summary of local price responses. Let $q(p,x)\in\mathbb{R}^n_{+}$ denote the demand vector at prices $p\in\mathbb{R}^n_{+}$ and contextual covariates $x$. The (point) price elasticity matrix is defined componentwise as
\[
E_{ij}(p,x)\;=\;\frac{\partial \log q_i(p,x)}{\partial \log p_j},
\]
so that $E_{ij}$ measures how the demand of product $i$ changes under an infinitesimal multiplicative perturbation of the price of product $j$. Elasticities are attractive in practice because they are scale-free, comparable across SKUs, and directly usable for reporting, and scenario evaluation.

At the same time, $E(p,x)$ is a differential object: it summarizes local derivatives, not the demand quantities $q(p,x)$ themselves. This raises a basic consistency question that becomes central in high dimensions: under what conditions does an elasticity field $E(\cdot,x)$ correspond to a single, globally well-defined demand function?

The term elasticity is used in several related but distinct ways. Depending on the context, one may distinguish between point elasticities, defined through derivatives, and arc elasticities, defined from finite changes; between own-price and cross-price elasticities; between Marshallian and Hicksian elasticities in structural consumer theory; and between elasticity and semi-elasticity interpretations in empirical models involving logarithmic transformations\footnote{For further discussion of these distinctions, see: \cite{poitras_point_elasticity,cameron_aed_logs}.}. In this paper, we focus on point log-price elasticities of observed demand conditional on contextual covariates, evaluated holding the context fixed. We do not interpret cross-price terms as Hicksian compensated effects, nor do we impose Slutsky symmetry. Instead, cross-price elasticities are interpreted as local directional responses in the fitted context-conditioned demand surface: \(E_{ij}\) needs not coincide with the reverse effect \(E_{ji}\). This convention makes the elasticity matrix a Jacobian object. We now make this notation precise.

\paragraph{Notation and elasticity matrix.}
Let \(p=(p_1,\dots,p_n)\in\mathbb{R}_{++}^n\) be the vector of strictly positive effective prices for \(n\) SKUs, and let \(x\in\mathcal{X}\) denote contextual covariates, such as store and SKU identity, time features, and promotion indicators. Demand, measured in volume, is modeled as a continuously differentiable map with respect to prices,
\[
v:\mathbb{R}_{++}^n\times\mathcal{X}\to\mathbb{R}_{++}^n,
\qquad
(p,x)\mapsto v(p,x),
\]
with components \(v_i(\cdot,x)\in C^1(\mathbb{R}_{++}^n)\) for each fixed \(x\). Throughout this section, derivatives are taken with respect to prices while holding the context \(x\) fixed. When \(x\) is fixed and clear from context, we omit it and write simply \(v(p)\).

\begin{definition}\label{def:elasticity}
The point price elasticity matrix \(E(p,x)\in\mathbb{R}^{n\times n}\) is defined componentwise by
\[
E_{ij}(p,x)
:=
\frac{\partial \log v_i(p,x)}{\partial \log p_j}
=
\frac{p_j}{v_i(p,x)}
\frac{\partial v_i}{\partial p_j}(p,x),
\qquad 1\le i,j\le n.
\]
Equivalently, using elementwise logarithms,
\[
d\big(\log v(p,x)\big)
=
E(p,x)\,d\big(\log p\big),
\]
where the differential is taken with respect to \(p\), holding \(x\) fixed.
\end{definition}

In the one-dimensional case, \(n=1\), the elasticity matrix reduces to a scalar elasticity
\[
\varepsilon(p,x)
=
\frac{d\log v(p,x)}{d\log p},
\]
and, for each fixed context \(x\), the elasticity relation becomes a separable ordinary differential equation in \(p\). Given a baseline pair \((p_0,v(p_0,x))\) with \(p_0>0\) and \(v(p_0,x)>0\), demand at any \(p>0\) is given by
\[
v(p,x)
=
v(p_0,x)
\exp\!\left(
\int_{p_0}^{p}
\varepsilon(s,x)\frac{ds}{s}
\right).
\]
Thus, in one dimension, specifying the local elasticity function \(\varepsilon(\cdot,x)\) determines the entire demand curve up to the positive normalization \(v(p_0,x)\).

In the multidimensional case, \(n>1\), the elasticity field \(E(\cdot,x)\) specifies local derivatives of log-demand in several price directions. This introduces an additional consistency requirement that is absent in the one-dimensional case. For each SKU \(i\) and fixed context \(x\), define the \(1\)-form on \(\mathbb{R}_{++}^n\)
\[
\omega_i(p,x)
:=
\sum_{j=1}^n E_{ij}(p,x)\,d(\log p_j).
\]
If the elasticity field is induced by a differentiable demand function, then Definition~\ref{def:elasticity} can be written componentwise as
\[
d\big(\log v_i(p,x)\big)
=
\omega_i(p,x),
\qquad
i=1,\dots,n.
\]
The question is therefore whether the local responses encoded in \(\omega_i\) can be integrated into a single log-demand function \(\log v_i(\cdot,x)\). In multiple dimensions, this is not automatic: the same final price vector may be reached through different paths of price changes, and a coherent demand representation requires the resulting change in log-demand to be path-independent. This is the integrability question addressed next. Importantly, this requirement is imposed row by row, for each demand component \(v_i\), and does not require the directional cross-effects \(E_{ij}\) and \(E_{ji}\) to be symmetric.

\begin{proposition}\label{prop:integrability-elasticity}
Fix a context value \(x\in\mathcal X\) and a demand component \(i\in\{1,\dots,n\}\). Let \(U\subset \mathbb{R}_{++}^n\) be a simply connected open set and assume that \(E_{ij}(\cdot,x):U\to\mathbb{R}\) is of class \(C^1\) for every \(j=1,\dots,n\). Define the \(1\)-form
\[
\omega_i(p,x)
:=
\sum_{j=1}^n E_{ij}(p,x)\,d(\log p_j),
\qquad p\in U.
\]
If \(\omega_i(\cdot,x)\) is closed on \(U\), that is, if
\[
d\omega_i(\cdot,x)=0
\qquad \text{on } U,
\]
then there exists a function \(v_i(\cdot,x):U\to(0,\infty)\), unique up to a positive multiplicative constant, such that
\[
d\big(\log v_i(p,x)\big)
=
\omega_i(p,x),
\qquad
p\in U.
\]
Equivalently, the closure condition can be written as the commutation of cross-partials in log-price coordinates:
\[
\partial_{\log p_k}E_{ij}(p,x)
=
\partial_{\log p_j}E_{ik}(p,x),
\qquad
\forall\, j,k\in\{1,\dots,n\},\ \forall\, p\in U.
\]
\end{proposition}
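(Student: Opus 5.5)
The plan is to reduce the statement to the classical Poincaré lemma by passing to log-price coordinates. The map \(\phi: p\mapsto u=\log p\), taken componentwise, is a \(C^\infty\) diffeomorphism from \(\mathbb{R}_{++}^n\) onto \(\mathbb{R}^n\) with smooth inverse \(u\mapsto e^{u}\). Hence \(V:=\phi(U)\) is open and simply connected, being the homeomorphic image of a simply connected open set. Set \(\tilde E_{ij}(u):=E_{ij}(e^{u},x)\). These functions are \(C^1\) on \(V\), and the pullback of \(\omega_i\) under \(\phi^{-1}\) is the ordinary 1-form \(\tilde\omega_i=\sum_j \tilde E_{ij}(u)\,du_j\). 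Because exterior derivatives commute with pullbacks, \(\omega_i\) is closed on \(U\) if and only if \(\tilde\omega_i\) is closed on \(V\).

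Next I will establish the stated equivalent form of closedness. Computing in \(u\)-coordinates gives
\[
d\tilde\omega_i=\sum_{j<k}\bigl(\partial_{u_j}\tilde E_{ik}-\partial_{u_k}\tilde E_{ij}\bigr)\,du_j\wedge du_k .
\]
Since the \(du_j\wedge du_k\) with \(j<k\) form a basis of 2-forms, \(d\tilde\omega_i=0\) exactly when \(\partial_{u_k}\tilde E_{ij}=\partial_{u_j}\tilde E_{ik}\) for all \(j,k\). The cases \(j=k\) are trivial. By the chain rule, \(\partial_{u_k}=p_k\partial_{p_k}=\partial_{\log p_k}\), so this condition is precisely the displayed commutation of cross-partials in log-price coordinates.

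For existence, I invoke the Poincaré lemma in its simply connected version: a closed \(C^1\) 1-form on a simply connected open set is exact. Concretely, I fix \(u_0\in V\) and define \(F(u)=\int_\gamma \tilde\omega_i\) along any piecewise \(C^1\) path \(\gamma\) in \(V\) from \(u_0\) to \(u\). This is the step that needs the most care. Path-independence follows because any two such paths are homotopic with fixed endpoints, since \(V\) is simply connected, and a closed form has homotopy-invariant line integrals; this can be cited, or proved by smoothing the homotopy and applying Stokes/Green on the square. Once \(F\) is well defined, differentiating along short straight segments inside a ball around \(u\) gives \(\partial_{u_j}F=\tilde E_{ij}\), so \(F\in C^2\) and \(dF=\tilde\omega_i\).

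Finally, I define \(v_i(p,x):=C\exp\bigl(F(\log p)\bigr)\) with \(C>0\). This function is positive and satisfies \(d\log v_i=\phi^*dF=\omega_i\), which gives existence. For uniqueness, suppose \(v_i,\hat v_i\) both work. Then \(d(\log v_i-\log\hat v_i)=0\) on \(U\). Since \(U\) is connected (being simply connected), \(\log v_i-\log\hat v_i\) is constant, so \(\hat v_i=c\,v_i\) with \(c>0\).
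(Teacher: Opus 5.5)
Your proposal is correct and follows the same route as the paper's proof: pass to log-price coordinates $u=\log p$, apply the Poincar\'e lemma on the simply connected image $V=\log(U)$, exponentiate the resulting potential, and get uniqueness from $d\log(v_i/\hat v_i)=0$ on the connected set $U$. Your explicit computation of $d\tilde\omega_i$ in the basis $du_j\wedge du_k$ and your path-integral sketch of the Poincar\'e lemma fill in steps that the paper only asserts or cites.
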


\begin{proof}
Introduce log-price coordinates \(u_j=\log p_j\) and set
\[
V:=\log(U)\subset\mathbb{R}^n .
\]
Since the logarithm map is a diffeomorphism from \(\mathbb{R}_{++}^n\) onto \(\mathbb{R}^n\), the set \(V\) is open and simply connected. Define the pullback of the elasticity \(1\)-form to log-price coordinates by
\[
\widetilde\omega_i(u,x)
:=
\sum_{j=1}^n E_{ij}(e^u,x)\,du_j,
\qquad u\in V.
\]
The closure condition \(d\omega_i(\cdot,x)=0\) on \(U\) is equivalent, in log-price coordinates, to
\[
\partial_{u_k}E_{ij}(e^u,x)
=
\partial_{u_j}E_{ik}(e^u,x),
\qquad
\forall\,j,k,
\]
or equivalently \(d\widetilde\omega_i(\cdot,x)=0\) on \(V\). By the Poincar\'e lemma \cite{ChoquetBruhatDeWittMorette2004}, every closed \(1\)-form on a simply connected open set is exact. Hence there exists a function
\[
\psi_i(\cdot,x):V\to\mathbb{R}
\]
such that
\[
d\psi_i(\cdot,x)=\widetilde\omega_i(\cdot,x).
\]
Now define
\[
v_i(p,x)
:=
\exp\!\big(\psi_i(\log p,x)\big),
\qquad p\in U.
\]
Then \(v_i(p,x)>0\). Moreover,
\[
d\big(\log v_i(p,x)\big)
=
d\big(\psi_i(\log p,x)\big)
=
\sum_{j=1}^n
\frac{\partial \psi_i}{\partial u_j}(\log p,x)\,d(\log p_j).
\]
Since \(d\psi_i=\widetilde\omega_i\), we have
\[
\frac{\partial \psi_i}{\partial u_j}(\log p,x)
=
E_{ij}(p,x),
\]
and therefore
\[
d\big(\log v_i(p,x)\big)
=
\sum_{j=1}^n E_{ij}(p,x)\,d(\log p_j)
=
\omega_i(p,x).
\]
This proves existence.

Finally, suppose that \(v_i\) and \(\widetilde v_i\) are two positive solutions. Then
\[
d(\log v_i)
=
d(\log \widetilde v_i),
\]
so
\[
d\!\left(\log\frac{v_i}{\widetilde v_i}\right)=0.
\]
Because \(U\) is connected, \(\log(v_i/\widetilde v_i)\) is constant on \(U\). Hence \(v_i/\widetilde v_i\) is a positive constant, so the solution is unique up to a positive multiplicative normalization.
\end{proof}

Proposition~\ref{prop:integrability-elasticity} provides the precise link between an elasticity field and a globally defined demand map. The condition \(d\omega_i=0\) is the standard closure condition for differential \(1\)-forms. In differential-geometric terms, it is the condition under which the local object \(\omega_i\) can be integrated into a potential function; in the present setting, that potential is the log-demand function \(\log v_i(\cdot,x)\). Thus, integrability is not an additional economic assumption, but a mathematical coherence requirement: the local price responses encoded by the \(i\)-th row of the elasticity matrix must be compatible with a single demand surface.

To see the implication, fix a baseline \(p_0\in U\) and let \(\gamma:[0,1]\to U\) be any piecewise smooth path with \(\gamma(0)=p_0\) and \(\gamma(1)=p\). If \(\omega_i\) is exact, then
\[
\log v_i(p,x)-\log v_i(p_0,x)
=
\int_\gamma \omega_i(\tilde p,x)
=
\sum_{j=1}^n
\int_\gamma
E_{ij}(\tilde p,x)\,d(\log \tilde p_j),
\]
and therefore
\[
v_i(p,x)
=
v_i(p_0,x)
\exp\!\left(
\int_\gamma \omega_i(\tilde p,x)
\right).
\]
Because exact forms have path-independent line integrals, the predicted demand \(v_i(p,x)\) depends only on the final price vector \(p\) and on the normalization \(v_i(p_0,x)\), not on the particular sequence of intermediate price changes used to reach \(p\).

If the closure condition fails, this coherence breaks down. Two different price-update paths ending at the same price vector can yield different values of
\[
\int_\gamma \omega_i,
\]
and therefore different reconstructed demands for the same final prices. One could restore determinism by imposing a canonical update path, for example by changing SKUs in a prescribed order, but this would make the forecast depend on an arbitrary convention rather than on a coherent demand function.

This distinction is central for learning-based elasticity estimation. If elasticities are parameterized directly---for example, by a machine-learning model whose output is an elasticity-related quantity rather than a demand function, as in the GBM-based approach of \cite{GREENSTEINMESSICA2020100914}---the resulting elasticity object will not, in general, satisfy \(d\omega_i = 0\). Consequently, integrating such elasticities can inherit path dependence, and the model may produce locally plausible elasticities that are not globally compatible with any demand map.

By contrast, if one models log-demand directly and defines elasticities by differentiation,
\[
E_{ij}(p,x)=\partial_{\ln p_j}\ln v_i(p,x),
\]
then
\[
\omega_i=d\ln v_i
\]
holds by construction, and hence \(d\omega_i = 0\) follows automatically under the required smoothness conditions. This observation is general: it applies to any sufficiently smooth demand-first model, including neural networks, generalized additive models, spline models, or other differentiable demand representations.

We therefore treat integrability not as a property uniquely delivered by ICDN, but as a design principle: the model should first define a smooth context-conditioned demand surface, and elasticities should then be obtained as its log-price derivatives. ICDN instantiates this principle with a structured spline-based price representation, analytic derivative formulas, and regularization terms designed to stabilize the resulting elasticity estimates in high-dimensional retail data.

\subsection{Example: constant elasticity matrix}

To illustrate the integrability condition, consider a simple multiproduct log-log specification at a fixed context value, which we omit from the notation. Suppose that the elasticity matrix is constant in log-price coordinates:
\[
E_{ij}(p)\equiv c_{ij}\quad (i\neq j),
\qquad
E_{ii}(p)\equiv \varepsilon_i,
\]
with \(c_{ij},\varepsilon_i\in\mathbb{R}\). For SKU \(i\), the associated \(1\)-form is
\[
\omega_i(p)
=
\sum_{j=1}^n E_{ij}(p)\,d(\log p_j)
=
\varepsilon_i\,d(\log p_i)
+
\sum_{j\neq i} c_{ij}\,d(\log p_j).
\]
Since the coefficients are constant in log-price coordinates, this form is exact. Indeed,
\[
\omega_i
=
d\left(
\varepsilon_i\log p_i
+
\sum_{j\neq i} c_{ij}\log p_j
\right),
\]
and hence \(d\omega_i=0\). Therefore, the elasticity field is integrable and path-independent. Given a baseline price vector \(p_0\in\mathbb{R}_{++}^n\) and a positive baseline demand \(v_i(p_0)>0\), a corresponding log-demand function is
\[
\log v_i(p)
=
\log v_i(p_0)
+
\varepsilon_i\bigl(\log p_i-\log p_{i0}\bigr)
+
\sum_{j\neq i}
c_{ij}\bigl(\log p_j-\log p_{j0}\bigr).
\]
Equivalently, demand admits the closed-form representation
\[
v_i(p)
=
v_i(p_0)
\left(\frac{p_i}{p_{i0}}\right)^{\varepsilon_i}
\prod_{j\neq i}
\left(\frac{p_j}{p_{j0}}\right)^{c_{ij}}.
\]

In retail pricing for fast-moving consumer goods, one typically expects \(\varepsilon_i<0\), so that lowering the own price increases demand. If prices are written relative to the baseline as
\[
p_j=p_{j0}(1-d_j),
\qquad
d_j\in[0,1),
\]
where \(d_j\) denotes a price change, then
\[
v_i(p)
=
v_i(p_0)\,
(1-d_i)^{\varepsilon_i}
\prod_{j\neq i}(1-d_j)^{c_{ij}}.
\]
Under this convention, \(c_{ij}>0\) means that an increase in the price of product \(j\) raises the demand of product \(i\), which is consistent with substitution. Equivalently, a discount on product \(j\) reduces the demand of product \(i\), which is consistent with cannibalization. By contrast, \(c_{ij}<0\) means that an increase in the price of product \(j\) lowers the demand of product \(i\), which is consistent with complementarity.

This benchmark is analytically transparent and path-independent by construction. Given a baseline sales vector, for instance observed sales in a reference period, it provides a closed-form way to project demand under any proposed price configuration, including both own-price changes and changes in other products' prices. It is therefore useful for scenario analysis, pricing simulations, and as an interpretable benchmark for multiproduct demand response.

This type of log-linear scanner-data specification is also closely related to classical retail models for price and promotion response, such as SCAN*PRO and its extensions \cite{wittink1988scan,vanHeerdeLeeflangWittink2002}. Those models highlight the practical appeal of parsimonious, interpretable demand equations in retail environments. However, the constant-elasticity formulation remains restrictive: it imposes globally constant own- and cross-price effects and therefore cannot capture the state-dependent, context-dependent, and nonlinear substitution patterns that are often present in modern retail demand.

\section{Elasticities with Neural Networks}\label{sec:nn}

Section~\ref{Sec:elasticity-path} shows that, in a multiproduct setting, elasticities should not be treated as arbitrary local objects: to define coherent demand responses, they must be compatible with a single log-demand surface. We now use this observation as the modeling principle for a neural demand model. The purpose of the neural network is not merely to forecast demand, but to learn a flexible and sufficiently smooth context-conditioned log-demand surface whose derivatives yield stable and economically interpretable elasticities. This distinction is important because small irregularities in the fitted demand surface can be amplified into large, noisy, or economically implausible price responses. 

Let \(p\in\mathbb{R}_{++}^n\) denote the vector of effective prices and let \(x\in\mathcal X\) denote contextual covariates, including store and SKU identity, seasonality, promotion indicators, and other market-state variables. We work in log-price coordinates
\[
u=\log p\in\mathbb{R}^n,
\]
and define log-demand as
\[
y(u,x)
:=
\log v(\exp(u),x)
\in\mathbb{R}^n.
\]
In this formulation, the context \(x\) affects the entire price-demand surface, not only the baseline demand level. Consequently, price sensitivities may vary with the market state: the same price vector can imply different elasticities across stores, products, seasons, promotional regimes, or other contextual configurations.

The elasticity matrix is therefore a local function of both prices and context. It is given by the Jacobian of log-demand with respect to log-prices:
\begin{equation}
E(u,x)
:=
\frac{\partial y(u,x)}{\partial u}
\in\mathbb{R}^{n\times n},
\qquad
E_{ij}(u,x)
=
\partial_{u_j}\log v_i(\exp(u),x).
\end{equation}
This is equivalent to Definition~\ref{def:elasticity}, since \(u_j=\log p_j\) and therefore \(\partial_{u_j}=\partial_{\log p_j}\). Thus, \(E_{ij}(u,x)\) measures the local response of product \(i\)'s demand to the log-price of product \(j\), evaluated at the specific price vector and contextual state \((u,x)\).

We parameterize log-demand with a neural model
\begin{equation}
y(u,x)
=
g_\theta(u,x)
\approx
\log v(\exp(u),x),
\end{equation}
where \(g_\theta\) represents the fitted context-conditioned log-demand surface. Demand predictions are therefore obtained by evaluating this surface directly at the target price vector and context. Elasticities are not modeled as separate outputs; instead, they are defined as the log-price derivatives of the same fitted surface:
\begin{equation}
E(u,x)
=
J_u g_\theta(u,x),
\qquad
E_{ij}(u,x)
=
\partial_{u_j} g_{\theta,i}(u,x).
\end{equation}
This construction also preserves the integrability property of Proposition~\ref{prop:integrability-elasticity}. For each fixed context \(x\), if every component \(g_{\theta,i}(\cdot,x)\) is twice continuously differentiable in log-prices, then the induced row-wise elasticity \(1\)-forms satisfy
\[
\omega_i(u,x)
=
\sum_{j=1}^n
\partial_{u_j}g_{\theta,i}(u,x)\,du_j
=
d g_{\theta,i}(u,x).
\]
Hence they are exact, and the closure conditions hold automatically. We formalize this claim in Section~\ref{sec:nn-method}.

Building on this demand-first formulation, we introduce the Integrable Context-Dependent Demand Network (ICDN), a neural demand model that implements this demand-first principle through a structured differentiable map \(g_\theta(u,x)\). ICDN is designed for multiproduct retail panels, where price effects may be nonlinear, context-dependent, and sparse across products. Rather than using a generic neural predictor for log-demand, ICDN parameterizes the fitted demand surface in a form that keeps price effects explicit and elasticities analytically tractable.

The integrability property itself is not specific to ICDN: any sufficiently smooth demand-first model would induce a row-wise integrable elasticity field if elasticities were defined as derivatives of predicted log-demand. The contribution of ICDN is instead to instantiate this principle through a structured log-price representation, combining linear price effects, spline-based nonlinear refinements, and directed cross-price interactions whose coefficients vary with the observed retail context. We detail this construction in the following subsection.

This separation between an explicit price representation and context-dependent coefficients is also what makes derivative calculations tractable. Rather than relying on a generic multilayer perceptron and recovering \(J_u g_\theta\) or higher-order derivatives through generic automatic-differentiation calls, ICDN combines context-dependent coefficients with a smooth price basis whose derivatives are available in closed form. Elasticities can therefore be computed analytically from the fitted log-demand surface. This distinction is important for scalability: when the number of products is large, materializing dense Jacobians or Hessians can be memory-intensive and may become impractical on GPU. By exploiting analytic derivatives of the structured price basis, ICDN avoids constructing full dense derivative operators and instead computes only the quantities required for training, regularization, and elasticity analysis. As a result, the architecture remains flexible enough to capture nonlinear and context-dependent price responses, while preserving derivative coherence and computational tractability in multiproduct demand estimation.

\subsection{Method: The Icdn Demand Structure}
\label{sec:nn-method}

We now specify the ICDN demand structure that defines the fitted log-demand surface \(g_\theta(u,x)\). Throughout this subsection, we work in log-price coordinates \(u=\log p\in\mathbb{R}^n\) and treat the context \(x\in\mathcal X\) as fixed when taking derivatives with respect to prices. ICDN models log-demand through a differentiable context-conditioned surface
\[
\widehat y_\theta(u,x)
=
g_\theta(u,x)
\in\mathbb{R}^n,
\qquad
\widehat v_\theta(p,x)
=
\exp\!\big(g_\theta(\log p,x)\big),
\]
where the exponential is applied componentwise. Thus, predicted demand is positive by construction. The model-implied elasticity matrix is the Jacobian of this fitted log-demand surface with respect to log-prices:
\[
\widehat E_\theta(u,x)
=
J_u g_\theta(u,x),
\qquad
\widehat E_{\theta,ij}(u,x)
=
\partial_{u_j}g_{\theta,i}(u,x).
\]
Equivalently, for each SKU \(i\), the associated model-implied elasticity \(1\)-form in log-price coordinates is
\[
\widehat\omega_{\theta,i}(u,x)
=
\sum_{j=1}^n
\partial_{u_j}g_{\theta,i}(u,x)\,du_j.
\]
ICDN parameterizes \(g_\theta\) by combining an explicit price representation with context-dependent coefficients. The nonlinear dependence on prices is represented through product-specific spline bases applied to the corresponding log-price coordinates. More precisely, all products share the same spline family and the same number \(K\) of basis functions, but the knot locations are adapted to the empirical price support of each SKU.

Let us describe this spline-based construction in more detail. For each SKU \(i\), let \(u_i\) denote its log-price and let \(Q_i\) be the empirical quantile function of the training observations of \(u_i\), after removing non-finite values. Given \(K\) spline basis functions, we choose uniformly quantile points\footnote{We use the interior quantile range $[0.05,0.95]$ rather than the full $[0,1]$ range to avoid placing knots at extreme price observations.}
\[
\tau_1<\cdots<\tau_K,
\qquad
\tau_k\in[0.05,0.95],
\]
and define the product-specific knots in the original log-price scale as
\begin{equation}\label{eq:knots}
 \tilde\kappa_{ik}
=
Q_i(\tau_k),
\qquad
k=1,\dots,K.   
\end{equation}
We also compute a product-specific centering and scaling,
\begin{equation}\label{eq:normalization}
\mu_i
=
\mathbb{E}_{\mathrm{train}}[u_i],
\qquad
\sigma_i
=
\max\{\operatorname{sd}_{\mathrm{train}}(u_i),\,0.2\},  
\end{equation}
where the lower bound on \(\sigma_i\) avoids numerical instability for products with very limited price variation. Together, the quantile-based knot placement and the product-specific normalization ensure that the spline basis is both empirically localized and numerically comparable across SKUs. The spline basis is then evaluated in normalized log-price coordinates:
\[
z_i
=
\frac{u_i-\mu_i}{\sigma_i},
\qquad
\kappa_{ik}
=
\frac{\tilde\kappa_{ik}-\mu_i}{\sigma_i}.
\]
Putting these elements together, the spline basis is defined as
\[
B_i(u_i)
=
\bigl(B_{i1}(u_i),\dots,B_{iK}(u_i)\bigr)^\top
\in\mathbb{R}^K,
\]
with
\[
B_{ik}(u_i)
=
\bigl[\operatorname{ReLU}(z_i-\kappa_{ik})\bigr]^3
=
\left[
\operatorname{ReLU}
\left(
\frac{u_i-\tilde\kappa_{ik}}{\sigma_i}
\right)
\right]^3.
\]
Importantly, the knots, centering terms, and scales are fixed before training. Therefore, the neural network does not learn the knot locations; it learns the context-dependent coefficients multiplying these fixed product-specific basis functions.

The advantage of this construction becomes clear when compared with a low-order global polynomial such as \(u_i^2\). A quadratic term can introduce curvature, but only through a single global pattern over the entire log-price range. Truncated-power splines instead localize this curvature: each term activates after a product-specific knot, allowing the fitted log-demand surface to adjust its slope and curvature only in the price regions where the SKU has empirical support. Figure~\ref{fig:quadratic-vs-splines} illustrates this distinction.
\begin{figure}[htbp]
    \centering

    \begin{subfigure}{0.43\textwidth}
        \centering
        \includegraphics[width=\textwidth]{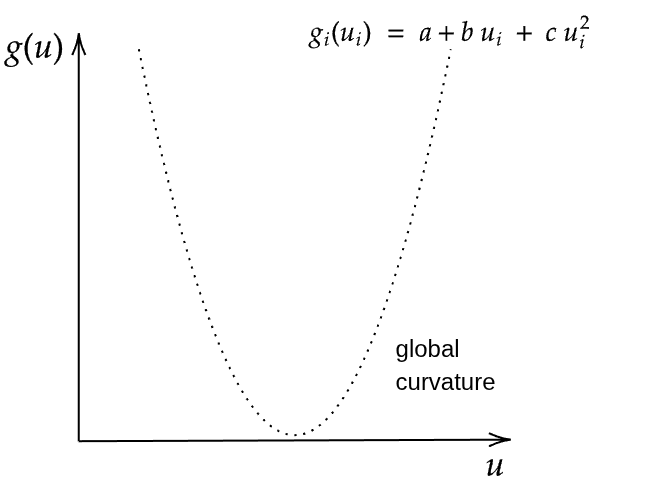}
        \caption{Global polynomial curvature.}
        \label{fig:quadratic-curvature}
    \end{subfigure}
    \hfill
    \begin{subfigure}{0.48\textwidth}
        \centering
        \includegraphics[width=\textwidth]{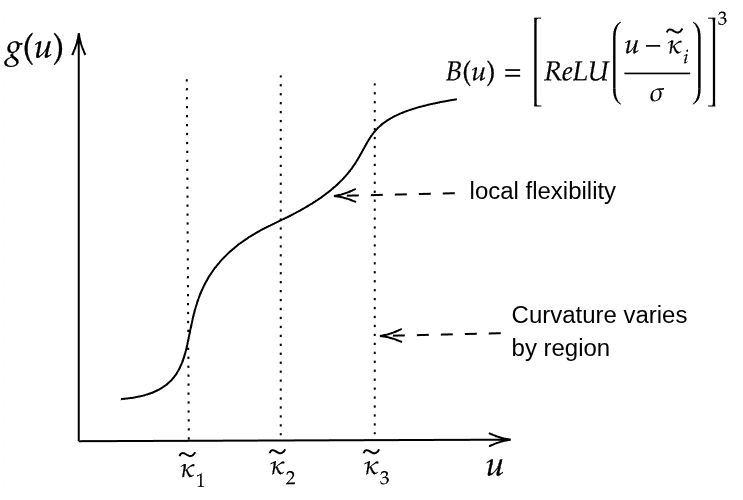}
        \caption{Local spline flexibility.}
        \label{fig:spline-flexibility}
    \end{subfigure}

    \caption{
    Comparison between a global quadratic price effect and a spline-based nonlinear price representation.
    A quadratic term such as \(u_i^2\) imposes a single curvature pattern over the full log-price domain.
    In contrast, truncated-power spline terms of the form
    \(B_{ik}(u_i)=[\operatorname{ReLU}((u_i-\tilde\kappa_{ik})/\sigma_i)]^3\)
    activate after product-specific knots, allowing the slope and curvature of the fitted log-demand surface to vary across price regions while preserving differentiability.
    }
    \label{fig:quadratic-vs-splines}
\end{figure}

The cubic order is chosen to balance local flexibility with the regularity required by the integrability argument. In general,
\[
\bigl[\operatorname{ReLU}(z_i-\kappa_{ik})\bigr]^m
\in C^{m-1},
\]
so, within the truncated-power family, \(m=3\) is the lowest-order specification that yields a \(C^2\) basis. This regularity is important because, as implied by the integrability condition in Proposition~\ref{prop:integrability-elasticity}, closure of the elasticity 1-form requires derivatives of the elasticity field to be well defined. 

Since the basis is evaluated in normalized coordinates, its derivatives with respect to the original log-price \(u_i\) are available in closed form. In particular,
\begin{equation}\label{eq:splines}
B_{ik}^{(r)}(u_i)
=
\frac{3!}{(3-r)!}\,
\frac{1}{\sigma_i^r}
\operatorname{ReLU}
\left(
\frac{u_i-\tilde\kappa_{ik}}{\sigma_i}
\right)^{3-r},
\qquad r\in\{0,1,2\}.
\end{equation}
Thus, nonlinear own-price effects and nonlinear cross-price interactions can be modeled flexibly, while their elasticity and curvature contributions remain analytically tractable.

In parallel, for each demand equation \(i\), the neural context encoder maps the contextual state \(x\) into a baseline demand coefficient
\[
b_i(x)\in\mathbb{R}.
\]
In addition, for each ordered demand-price pair \((i,j)\), with \(j=1,\ldots,n\), the model produces price-response coefficients
\[
\beta_{ij}(x)\in\mathbb{R},
\qquad
w_{ij}(x)\in\mathbb{R}^K.
\]
The coefficient \(\beta_{ij}(x)\) captures the locally linear effect of the log-price of product \(j\) on the log-demand of product \(i\), while \(w_{ij}(x)\) controls a product-\(j\)-specific nonlinear spline refinement of that effect. The case \(j=i\) corresponds to the own-price response of product \(i\), whereas \(j\neq i\) corresponds to directed cross-price responses.

In the implementation, the own-price linear coefficient is sign-constrained. The network outputs an unconstrained scalar \(\beta^{\mathrm{raw}}_{ii}(x)\), which is transformed as
\begin{equation}\label{eq:beta_ii}
 \beta_{ii}(x) = -\operatorname{softplus}\!\left(\beta^{\mathrm{raw}}_{ii}(x)\right).   
\end{equation}
Therefore, the linear own-price component is always non-positive by construction, consistent with the standard consumer-demand prior that own-price responses should 
be non-positive. This constraint is applied only to the own-price linear head \(\beta^{\mathrm{raw}}_{ii}(x)\). Cross-price coefficients \(\beta_{ij}(x)\), \(i \neq j\), are left unrestricted.

For each ordered cross-product pair \(i\neq j\), the model also produces a nonlinear interaction matrix
\[
U^{(ij)}(x)\in\mathbb{R}^{K\times K}.
\]
This matrix governs the spline-spline interaction between the own log-price of product \(i\) and the log-price of product \(j\) in the demand equation of product \(i\). Economically, it allows the cross-price relationship to vary across pricing regimes, so substitution or cannibalization effects need not be constant over the observed price supports of the two products. For example, a discount on product \(j\) may strongly cannibalize product \(i\) when \(i\) is priced near its regular range, but have a weaker incremental effect when \(i\) is already heavily discounted. Furthermore, since \(U^{(ij)}(x)\) is defined for ordered pairs, these nonlinear cross-price effects are directional: the effect of product \(j\)'s price on product \(i\)'s demand is learned separately from the reverse effect\footnote{This interaction need not be spline-based in principle. A simpler bilinear term such as \(u_i u_j\) could also capture price interactions, but it would impose a global and more restrictive functional form, as discussed in Figure~\ref{fig:quadratic-vs-splines}.}.

Taken together, the \(i\)-th component of the fitted log-demand surface is defined as
\begin{equation}
g_{\theta,i}(u,x)
=
b_i(x)
+
\sum_{j=1}^{n}
\left[
\beta_{ij}(x)u_j
+
w_{ij}(x)^\top B_j(u_j)
\right]
+
\sum_{j\neq i}
B_i(u_i)^\top U^{(ij)}(x)B_j(u_j).
\label{eq:icdn-g}
\end{equation}
The first summation contains the direct price-response terms. For \(j=i\), these terms represent the own-price component, including both a linear log-log effect and a nonlinear spline refinement. For \(j\neq i\), they represent directed cross-price effects, again decomposed into a linear component and a nonlinear univariate refinement in the price of product \(j\). The second summation contains nonlinear pairwise interactions between the own-price position of product \(i\) and the price position of product \(j\). Since all coefficients are functions of \(x\), both own-price and cross-price responses may vary across stores, products, seasons, promotional regimes, and other market settings.

Elasticities follow by differentiating Eq.~\ref{eq:icdn-g} with respect to log-prices. For \(i\neq j\), the cross-price elasticity of product \(i\) with respect to the price of product \(j\) is
\[
\widehat E_{\theta,ij}(u,x)
=
\partial_{u_j}g_{\theta,i}(u,x)
=
\beta_{ij}(x)
+
w_{ij}(x)^\top B_j'(u_j)
+
B_i(u_i)^\top U^{(ij)}(x)B_j'(u_j).
\]
The own-price elasticity is
\[
\widehat E_{\theta,ii}(u,x)
=
\partial_{u_i}g_{\theta,i}(u,x)
=
\beta_{ii}(x)
+
w_{ii}(x)^\top B_i'(u_i)
+
\sum_{j\neq i}
B_i'(u_i)^\top U^{(ij)}(x)B_j(u_j).
\]
Thus, each element of \(\widehat E_\theta(u,x)\) is available in closed form from the product-specific spline bases \(B_j\), their derivatives \(B_j'\), and the context-dependent coefficients. 

It is important to emphasize that the own-price elasticity $\widehat E_{\theta,ii}(u,x)$ is not determined solely by the constrained linear own-price coefficient \(\beta_{ii}(x)\) defined in Eq.~\ref{eq:beta_ii}. Although this  coefficient is non-positive by construction, the total model-implied own-price elasticity also contains nonlinear spline and interaction derivative terms. Therefore, the sign constraint on \(\beta_{ii}(x)\) does not impose global monotonicity of demand with respect to own price. Local positive own-price  elasticities may still arise if the nonlinear spline or interaction components  dominate the constrained linear term. 

Beyond computational tractability, the closed-form elasticity representation also opens the door to further diagnostic and optimization analyses. Once the context \(x\) is fixed, the coefficients of the demand potential are fixed, and the implied elasticities become explicit functions of the log-price vector \(u\). This makes it possible, in principle, to study how price sensitivity varies across the relevant price domain and to identify regions where own-price elasticities become substantially more negative or where the fitted demand surface changes curvature rapidly. These regions can indicate price ranges where demand becomes especially sensitive, so that further price increases may lead to disproportionately large volume losses. A full treatment of this question, however, requires specifying an explicit pricing objective, such as revenue, margin, profit, or constraints across the product portfolio, and is therefore beyond the scope of the present paper. We leave the use of ICDN for critical price-region analysis and price optimization to future research.

Since the elasticity matrix is induced by differentiating the fitted demand potential \(g_\theta\), the closure conditions from Proposition~\ref{prop:integrability-elasticity} hold by construction. The following proposition formalizes this property.

\begin{proposition}
\label{prop:integrability-by-construction}
Fix a context value \(x\in\mathcal X\) and let \(V\subset\mathbb{R}^n\) be an open set in log-price space. Suppose that each component \(g_{\theta,i}(\cdot,x)\) is of class \(C^2\) on \(V\). Define
\[
\widehat E_{\theta,ij}(u,x)
:=
\partial_{u_j}g_{\theta,i}(u,x),
\qquad
\widehat\omega_{\theta,i}(u,x)
:=
\sum_{j=1}^n
\widehat E_{\theta,ij}(u,x)\,du_j.
\]
Then \(\widehat\omega_{\theta,i}(\cdot,x)\) is exact, and hence closed, on \(V\). In particular, the row-wise closure conditions hold automatically:
\[
\partial_{u_k}\widehat E_{\theta,ij}(u,x)
=
\partial_{u_j}\widehat E_{\theta,ik}(u,x),
\qquad
\forall\,j,k,\ \forall\,u\in V.
\]
\end{proposition}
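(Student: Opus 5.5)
The plan is to recognize that \(\widehat\omega_{\theta,i}(\cdot,x)\) is, by definition, the differential of the scalar function \(g_{\theta,i}(\cdot,x)\), which gives exactness. Closedness then follows from \(d^2=0\), which in coordinates is the symmetry of second derivatives. No topological hypothesis on \(V\) such as simple connectedness is needed. The potential is given explicitly, so we do not have to invoke the Poincar\'e lemma used in Proposition~\ref{prop:integrability-elasticity}; we only verify its hypothesis and conclusion directly.

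\textbf{Step 1 (exactness).} Fix \(x\) and \(i\). Since \(g_{\theta,i}(\cdot,x)\in C^2(V)\subset C^1(V)\), its differential on the open set \(V\) is
\[
d g_{\theta,i}(u,x)=\sum_{j=1}^n \partial_{u_j}g_{\theta,i}(u,x)\,du_j=\sum_{j=1}^n \widehat E_{\theta,ij}(u,x)\,du_j=\widehat\omega_{\theta,i}(u,x).
\]
Thus \(\widehat\omega_{\theta,i}(\cdot,x)\) is exact with potential \(g_{\theta,i}(\cdot,x)\), which is the fitted log-demand. Its coefficients \(\widehat E_{\theta,ij}\) are \(C^1\), so \(d\widehat\omega_{\theta,i}\) is defined.

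\textbf{Step 2 (closedness and the cross-partial identities).} Compute
\[
d\widehat\omega_{\theta,i}=\sum_{j<k}\bigl(\partial_{u_j}\widehat E_{\theta,ik}-\partial_{u_k}\widehat E_{\theta,ij}\bigr)\,du_j\wedge du_k
=\sum_{j<k}\bigl(\partial_{u_j}\partial_{u_k}g_{\theta,i}-\partial_{u_k}\partial_{u_j}g_{\theta,i}\bigr)\,du_j\wedge du_k .
\]
By Schwarz--Clairaut, \(C^2\) regularity makes the mixed partials equal on \(V\). Hence every coefficient vanishes, which gives \(d\widehat\omega_{\theta,i}=0\). Reading off the coefficients yields \(\partial_{u_k}\widehat E_{\theta,ij}=\partial_{u_j}\widehat E_{\theta,ik}\) for all \(j,k\) and all \(u\in V\). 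The case \(j=k\) is trivial.

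The only substantive step is the appeal to equality of mixed partials, and it is exactly where the \(C^2\) hypothesis is used. I would close with a short remark tying this to ICDN. With \(m=3\), each truncated-power basis function in Eq.~\ref{eq:splines} is \(C^2\), and the other terms of Eq.~\ref{eq:icdn-g} are linear in \(u_j\) or products of such basis functions in distinct variables. So \(g_{\theta,i}(\cdot,x)\) is \(C^2\) on all of \(\mathbb{R}^n\), and the proposition applies with \(V=\mathbb{R}^n\).
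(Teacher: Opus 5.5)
Your proof is correct and follows essentially the same route as the paper: you identify $\widehat\omega_{\theta,i}=d g_{\theta,i}$ to get exactness with the explicit potential $g_{\theta,i}(\cdot,x)$, then use Clairaut's theorem under the $C^2$ hypothesis to obtain the row-wise cross-partial identities. Your explicit wedge-product computation of $d\widehat\omega_{\theta,i}$ and your closing remark that the cubic truncated-power parameterization makes $g_{\theta,i}(\cdot,x)$ globally $C^2$ are sound additions, but they do not change the argument.
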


\begin{proof}
By construction,
\[
\widehat\omega_{\theta,i}(u,x)
=
\sum_{j=1}^n
\partial_{u_j}g_{\theta,i}(u,x)\,du_j
=
d\bigl(g_{\theta,i}(u,x)\bigr).
\]
Therefore \(\widehat\omega_{\theta,i}(\cdot,x)\) is exact and hence closed. Since \(g_{\theta,i}(\cdot,x)\in C^2(V)\), Clairaut's theorem gives equality of mixed partial derivatives. Hence,
\[
\partial_{u_k}\widehat E_{\theta,ij}(u,x)
=
\partial_{u_j}\widehat E_{\theta,ik}(u,x),
\]
which proves the row-wise closure condition.
\end{proof}

Proposition \ref{prop:integrability-by-construction} should be interpreted as a coherence result for demand-first modeling, rather than as a property exclusive to the ICDN architecture. The exactness of the elasticity 1-forms follows from defining elasticities as derivatives of a smooth log-demand function. The distinctive feature of ICDN is therefore not the mathematical fact that a differentiated demand surface yields an integrable elasticity field, but the way in which the surface is parameterized: price effects remain explicit, nonlinearities are represented through spline bases with analytic derivatives, cross-price effects are modeled as sparse directed interactions, and, as detailed in the next subsection, regularization is applied directly to economically relevant quantities.

\subsection{Architecture overview}\label{sec:architecture}

Figure~\ref{fig:icdn-arch} summarizes the forward pass of ICDN. The architecture is built around three components. First, a context branch constructs one contextual token per SKU and maps these tokens into product-specific latent representations. Second, an attention-based sparse neighbor selector identifies a small set of relevant directed product interactions for each focal SKU. Third, a price basis branch evaluates product-specific spline basis functions and analytic derivatives at the observed log-prices. These components are combined in an integrable demand module that defines the differentiable log-demand map \(g_\theta(u,x)\), from which both demand predictions and elasticities are obtained.

A key design choice is that ICDN does not compress the market state into a single global latent vector. Instead, the model constructs one token per SKU. For product \(i\), the token combines global market information, such as store identity, calendar features, seasonality, and store-week promotional intensity, with information specific to each SKU, such as product metadata, lagged and rolling demand features, lifecycle variables, package-size information, missingness indicators, and local competitive descriptors. The current log-price vector is kept in the separate price branch, so the contextual tokens describe the market and product state excluding the price input used for differentiation. We denote this product token by
\[
\tau_i(x)\in\mathbb{R}^{d_{\mathrm{tok}}}, \qquad i=1,\ldots,n.
\]
Stacking all SKU tokens yields
\[
\tau(x)=\bigl(\tau_1(x),\ldots,\tau_n(x)\bigr)\in\mathbb{R}^{n\times d_{\mathrm{tok}}}.
\]
A shared product encoder is then applied independently to each token:
\[
h_i=\phi_\theta(\tau_i(x)), \qquad i=1,\ldots,n,
\]
producing one latent representation for each SKU
\[
h(x)=\bigl(h_1(x),\ldots,h_n(x)\bigr)\in\mathbb{R}^{n\times d_h}.
\]
This tokenized representation is primarily motivated by scalability. If the whole market state were compressed into a single global latent vector \(h(x)\), the model would not have access to product-specific representations \(h_i\) and \(h_j\) from which to construct directed pairwise price responses. In that case, cross-price effects would have to be generated as a dense \(n\times n\) interaction structure directly from the global representation, with output dimension scaling as \(\mathcal{O}(n^2)\). By representing each SKU with its own latent vector, ICDN separates product representation from pair construction: own-price response terms are generated from \(h_i\), while cross-price response terms are generated from ordered pairs \((h_i,h_j)\). This factorization makes it possible to evaluate cross-price effects only on the top \(k\) selected neighbors of each focal product. Consequently, the cross-price component scales as \(\mathcal{O}(nk)\) rather than \(\mathcal{O}(n^2)\), with \(k\ll n\), while still allowing heterogeneous and directional interactions across products.

Own-price parameters are obtained from the latent representation of the corresponding SKU
\[
h_i \;\mapsto\; \{b_i(h),\,\beta_{ii}(h),\,w_{ii}(h)\}\,,
\]
and cross-price response parameters are generated from ordered pairs of SKU representations:
\[
h_\text{pair}:=(h_i,h_j) \;\mapsto\; \{\beta_{ij}(h_\text{pair}),\,w_{ij}(h_\text{pair}),\,U^{(ij)}(h_\text{pair})\}, \qquad i\neq j.
\]

The ordered-pair construction is intentional: the effect of product \(j\)'s price on the demand of product \(i\) is learned separately from the reverse effect. Therefore, the architecture allows directional cross-price responses, implying $\widehat E_{\theta,ij} \neq  \widehat E_{\theta,ji}.$

To make the cross-price component scalable, ICDN does not retain all off-diagonal product pairs. Instead, it uses a sparse neighbor selector based on scaled dot-product attention \cite{NIPS2017_Attention}. For each observation \(b=(s,t)\) in the current mini-batch, where \(s\) denotes the store and \(t\) the week, the latent SKU representations are projected into query and key vectors,
\[
q_i^{(b)} = W_Q h_i^{(b)}, 
\qquad 
k_j^{(b)} = W_K h_j^{(b)}.
\]
The relevance of candidate SKU \(j\) for focal SKU \(i\) is then scored as
\[
s_{ij}^{(b)}
=
\frac{q_i^{(b)\top}k_j^{(b)}}{\sqrt{d_{\mathrm{att}}}}
+
\xi_{ij},
\qquad i\neq j,
\]
where \(\xi_{ij}\) is a fixed metadata-based bonus computed from SKU attributes such as brand, style, and package-size similarity. For instance, package-size similarity is measured in log-space, so SKUs with similar liters per unit receive a larger bonus.\footnote{Using query-key projections is more flexible than relying directly on the raw similarity \(h_i^\top h_j\). The latter imposes a more rigid and nearly symmetric notion of similarity.}

The neighbor selector operates in two stages. The first stage determines the sparse support of directed cross-price interactions, while the second stage assigns observation-specific attention weights on the selected edges.

During training, ICDN uses an online sparse-neighbor selection path. For each focal SKU \(i\), the model computes a relevance score \(s_{ij}^{(b)}\) for every candidate SKU \(j\) in each store-week observation \(b\) of the mini-batch. The scores are then aggregated across the store-week observations in the mini-batch,
\[
\bar s_{ij}^{\mathcal B}
=
\frac{1}{|\mathcal B|}
\sum_{b\in\mathcal B}
s_{ij}^{(b)},
\]
where \(\mathcal B\) denotes the current mini-batch of store-week observations and 
\(|\mathcal B|\) is the number of observations in that mini-batch. Self-pairs are masked out, and the top-ranked candidates according to \(\bar s_{ij}^{\mathcal B}\) are retained as temporary neighbors of \(i\) for the current forward pass. Since a focal SKU can interact with at most \(n-1\) other SKUs, the effective number of retained neighbors is
\[
k_{\mathrm{eff}}=\min\{k,n-1\},
\]
where \(n\) denotes the number of SKUs in the product set and \(k\) is the prescribed neighbor per focal SKU. The resulting online sparse edge set is denoted by
\[
\mathcal P^{\mathcal B}
\subset
\{(i,j):i\neq j\},
\qquad
|\{j:(i,j)\in\mathcal P^{\mathcal B}\}|=k_{\mathrm{eff}}.
\]
This online selection step is used only as a stochastic neighbor-exploration mechanism during training, while the contextual representations and attention parameters are being learned. It is not part of the final demand surface used for reporting. 

After training has converged, the sparse graph used for validation, testing, prediction reporting, and elasticity extraction is frozen. For each temporal split, all reported predictions and model-implied elasticities are computed using this frozen training-split graph, so they do not depend on the composition of the evaluation mini-batch.  Specifically, after loading the final trained checkpoint, ICDN computes the average learned relevance score for each directed product pair over the training split,
\[
\bar s_{ij}^{\mathrm{train}}
=
\frac{1}{|\mathcal D_{\mathrm{train}}|}
\sum_{b\in\mathcal D_{\mathrm{train}}}
s_{ij}^{(b)}.
\]
The final frozen neighbor graph is then defined as
\[
\mathcal P^\star
=
\left\{
(i,j):
j\in
\operatorname{TopK}_{\ell\neq i}
\bar s_{i\ell}^{\mathrm{train}}
\right\},
\]
where \(\operatorname{TopK}_{\ell\neq i}\) denotes the set of the \(k_{\mathrm{eff}}\) highest-scoring candidates among all \(\ell\neq i\). Finally, it is stored with the final checkpoint and kept fixed for validation, testing, and reported elasticity estimates. 

The neighbor selector can optionally incorporate category prioritization. This option encodes the economic prior that the strongest substitution effects are often expected among SKUs within the same product category. However, it is not imposed by default, because the model should also be able to select economically relevant cross-category relationships, including weaker substitution, complementarity, or merchandising effects.

When category prioritization is enabled, the first-stage ranking is modified so that same-category candidates are considered before cross-category candidates. Operationally, this is implemented by adding a large auxiliary ranking constant to the aggregated score used for graph selection. For a generic aggregation score \(\bar s_{ij}\), corresponding either to \(\bar s_{ij}^{\mathcal B}\) during online training or to \(\bar s_{ij}^{\mathrm{train}}\) when freezing the graph, we define
\[
\tilde s_{ij}
=
\bar s_{ij}
+
M\,\mathbf{1}\{\mathrm{cat}(i)=\mathrm{cat}(j)\},
\qquad
M=10^6.
\]
The value \(M=10^6\) is a numerical ranking device used to enforce priority of same-category candidates in the top-\(k_{\mathrm{eff}}\) selection. It should not be interpreted as an economic magnitude or as an attention score. After self-pairs have been masked out, the top-\(k_{\mathrm{eff}}\) candidates are selected according to \(\tilde s_{ij}\). If fewer than \(k_{\mathrm{eff}}\) same-category candidates are available for a focal SKU \(i\), the remaining slots are filled with the highest-ranked non-self candidates according to the original aggregated scores \(\bar s_{ij}\).

In the second stage, ICDN assigns observation-specific attention weights over the selected neighbors. During training, the weights are computed over the temporary online graph \(\mathcal P^{\mathcal B}\). During validation, testing, and elasticity extraction, they are computed over the frozen graph \(\mathcal P^\star\). Let \(\mathcal P\) denote the selected graph used in the current forward pass, either \(\mathcal P^{\mathcal B}\) during training or \(\mathcal P^\star\) during evaluation. For each observation \(b\), the original logits \(s_{ij}^{(b)}\), including the metadata bonus but excluding the category-priority ranking constant, are restricted to the selected neighbors and normalized over the candidate set of each focal product:
\[
a_{ij}^{(b)}
=
\frac{
\exp(s_{ij}^{(b)})
}{
\sum_{\ell:(i,\ell)\in\mathcal P}
\exp(s_{i\ell}^{(b)})
},
\qquad
(i,j)\in\mathcal P.
\]
Thus, the sparse graph controls which product pairs are evaluated, while the attention weights determine the observation-specific strength of each selected interaction. We would like to remark that the category-priority constant $M$ is used---if it is enabled---only for first-stage neighbor selection and is not included in the second-stage softmax. Consequently, category membership can affect which neighbors are selected, but it does not directly increase their final attention weights.

In parallel, the price basis branch receives the log-price vector
\[
u=(u_1,\ldots,u_n)
\]
and evaluates, for each SKU, the product-specific cubic truncated-power spline basis, together with the analytic derivatives used to compute elasticities and the regularization terms introduced below:
\[
\{B_i(u_i),\,B_i'(u_i),\,B_i''(u_i)\}_{i=1}^n.
\]
The knots, centering constants, and scaling factors are fixed before training, so these basis functions and derivatives are available in closed form, as defined in Eq.~\ref{eq:splines}. The first derivatives \(B_i'(u_i)\) enter the elasticity formulas, while the second derivatives \(B_i''(u_i)\) enter the own-price curvature penalty introduced in the next subsection. 

The integrable demand module combines the own-price response terms, the selected attention-weighted cross-price response terms, and the product-specific spline basis values. Suppressing the mini-batch index for readability, the predicted output log-demand for product \(i\) defined in Eq.~\ref{eq:icdn-g} becomes
\begin{align}
&g_{\theta,i}(u,h;\mathcal P) =
b_i(h)
+
\beta_{ii}(h)u_i
+
w_{ii}(h)^\top B_i(u_i)
\nonumber\\
&\quad+
\sum_{j:(i,j)\in\mathcal P}
a_{ij}(h_{\text{pair}})
\left[
\beta_{ij}(h_{\text{pair}})u_j
+
w_{ij}(h_{\text{pair}})^\top B_j(u_j)
+
B_i(u_i)^\top U^{(ij)}(h_{\text{pair}})B_j(u_j)
\right].
\label{eq:icdn-sparse-g}
\end{align}
\begin{figure}[!p]
  \centering
  \includegraphics[width=0.58\linewidth]{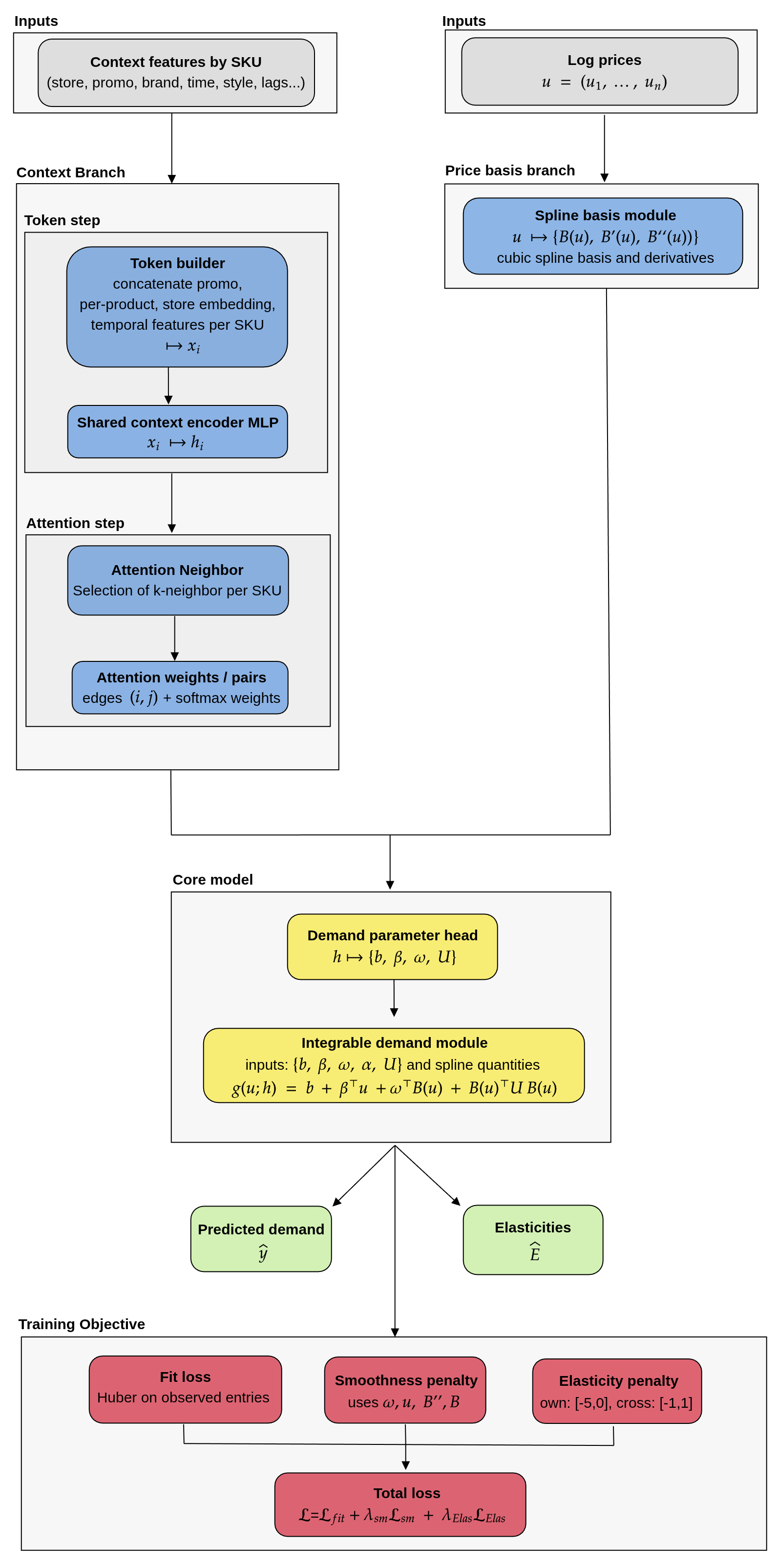}
  \caption{\textbf{ICDN forward pass.} Context tokens are encoded into one latent representation per SKU, which generate own-price response terms, sparse attention-weighted cross-price response terms, and the parameters of the structured demand potential. Spline bases and analytic derivatives of log-prices are combined with these parameters to produce log-demand predictions and elasticities by exact differentiation.}
  \label{fig:icdn-arch}
\end{figure}
Output elasticities are obtained by differentiating this same fitted log-demand surface with respect to log-prices. The latten-vector state \(h\), the selected sparse graph \(\mathcal P\), and the attention weights \(a_{ij}(h_{\text{pair}})\) are held fixed when taking price derivatives. The own-price elasticity is
\[
\widehat E_{\theta,ii}(u,h;\mathcal P)
=
\frac{\partial g_{\theta,i}(u,h;\mathcal P)}{\partial u_i}
=
\beta_{ii}(h)
+
w_{ii}(h)^\top B_i'(u_i)
+
\sum_{j:(i,j)\in\mathcal P}
a_{ij}(h_{\text{pair}})
B_i'(u_i)^\top U^{(ij)}(h_{\text{pair}})B_j(u_j).
\]
For each selected directed pair \((i,j)\in\mathcal P\), with \(i\neq j\), the cross-price elasticity of product \(i\) with respect to the price of product \(j\) is
\[
\widehat E_{\theta,ij}(u,h;\mathcal P)
=
\frac{\partial g_{\theta,i}(u,h;\mathcal P)}{\partial u_j}
=
a_{ij}(h_{\text{pair}})
\left[
\beta_{ij}(h_{\text{pair}})
+
w_{ij}(h_{\text{pair}})^\top B_j'(u_j)
+
B_i(u_i)^\top U^{(ij)}(h_{\text{pair}})B_j'(u_j)
\right].
\]
For non-selected directed pairs \((i,j)\notin\mathcal P\), the corresponding cross-price effect is not evaluated, or equivalently is set to zero under the sparse approximation.

\subsection{Training objective and regularization}
\label{sec:training-objective}

We train ICDN with a composite objective that balances predictive accuracy, smoothness of the fitted demand surface, and business plausibility of the implied elasticity matrix. Let
\[
\mathcal{O} = (u,h,y^\star,m)
\]
denote a training observation, where \(u=\log p\in\mathbb{R}^n\) is the observed log-price vector, \(h\in\mathcal H\) collects the contextual latten vector, \(y^\star=\log v\in\mathbb{R}^n\) is the observed log-demand vector, and \(m = (m_1,\ldots,m_n)\), with \(m_i \in\{0,1\}\), is an observation mask indicating which product demands are observed in the corresponding store-week\footnote{This mask accounts for a common feature of retail panels: not every SKU is observed in every store-week.}.

The training objective is
\begin{equation}
\label{eq:objective}
\mathcal{L}(\theta)
=
\mathbb{E}_{\mathcal{O}}
\Big[
\mathcal{L}_{\mathrm{fit}}(u,h,y^\star,m)
+
\lambda_{\mathrm{sm}}\mathcal{L}_{\mathrm{sm}}(u,h,m)
+
\lambda_{\mathrm{elast}}\mathcal{L}_{\mathrm{elast}}(u,h,m)
\Big].
\end{equation}
In practice, the expectation $\mathbb{E}_\mathcal{O}$ is approximated by the sample average over mini-batches. The first term controls predictive fit on observed log-demands, the second term regularizes the geometry of the fitted demand surface, and the third term imposes a soft business-plausibility discipline on the implied own- and cross-price elasticities.

For the fit term we use a robust loss on log-demand. Let
\[
N_m=\sum_{i=1}^n m_i
\]
denote the number of observed SKU demand entries in the current store-week observation. We define
\[
\mathcal{L}_{\mathrm{fit}}(u,h,y^\star,m)
=
\frac{1}{N_m}
\sum_{i=1}^n
m_i\,
\mathrm{Huber}_{\delta}
\left(
g_{\theta,i}(u,h)-y_i^\star
\right).
\]
The Huber loss is used to make the fit term robust to the residual heterogeneity that remains after cleaning. Store-SKU series differ in temporal support, price movements are often intertwined with promotions, and atypical price observations may still occur. A purely quadratic loss can therefore overreact to isolated or locally hard-to-explain weeks, whereas the Huber loss preserves sensitivity to small residuals while downweighting large deviations. Furthermore, normalizing by \(N_m\) makes the fit term an average over available products rather than a sum. This prevents store-weeks with more observed SKUs from receiving a mechanically larger weight in the objective.

Because elasticities are derivatives of \(g_\theta\), the objective cannot be understood as a pure forecasting loss. A model that closely fits observed log-demand but does so through a highly irregular surface may still produce unstable, noisy, or economically implausible elasticities. ICDN therefore seeks a balance between predictive fit and derivative stability: the fitted demand surface should explain observed demand while remaining smooth enough for its local derivatives to be interpreted as stable model-implied price responses. In this sense, the regularization terms are not auxiliary implementation details, but part of the elasticity-estimation objective.

To control the geometry of the learned demand surface, we regularize own-price curvature, which is available analytically under the spline parameterization. Define
\[
\kappa_i(u,h)
:=
\frac{\partial^2 g_{\theta,i}(u,h)}{\partial u_i^2}
\]
as the own-price curvature of the \(i\)-th log-demand component. When taking price derivatives, the contextual-state-latten vector \(h\), the selected sparse graph \(\mathcal P_\theta(h)\), and the attention weights \(a_{ij}(h_{\text{pair}})\) are held fixed with respect to prices. Under the ICDN parameterization in Eq.~\ref{eq:icdn-sparse-g}, the baseline and linear price terms do not contribute to this second derivative. Moreover, among the univariate spline terms, only the own-price spline component depends on \(u_i\). Hence,
\[
\kappa_i(u,h)
=
w_{ii}(h)^\top B_i''(u_i)
+
\sum_{j:(i,j)\in\mathcal P_\theta(x)}
a_{ij}(h_{\text{pair}})
B_i''(u_i)^\top U^{(ij)}(h_{\text{pair}})B_j(u_j).
\]
We define the smoothness penalty as
\[
\mathcal{L}_{\mathrm{sm}}(u,h,m)
=
\frac{1}{N_m}
\sum_{i=1}^n
m_i\,\kappa_i(u,h)^2.
\]
This term discourages excessive own-price curvature in the fitted log-demand surface, stabilizes the induced own-price elasticities, and acts as a complexity regularizer by favoring smoother price-response functions unless the data support sharper nonlinearities. Penalizing the full Hessian, including all cross-curvature terms, is possible in principle but substantially more expensive and less stable in high-dimensional multiproduct settings.

We next impose a soft business-plausibility discipline directly on the model-implied elasticity matrix. Rather than imposing separate regularizers on individual cross- or own-price parameters ($\beta_{ij}$, $w_{ij}$ and $U_{ij}$), we regularize the implied local elasticity. For each component $E_{ij}$, let \(\ell_{ij}<r_{ij}\) denote a business-plausible range. In our empirical application, own-price elasticities are assigned the range
\[
[\ell_{\mathrm{own}},r_{\mathrm{own}}]=[-5,0].
\]
The upper bound reflects the empirical prior that own-price responses should be locally non-positive, while the lower bound prevents extremely negative local elasticities that may lead to unstable pricing simulations. The lower limit is inspired by the range of disaggregated alcoholic-beverage demand elasticities reported in \cite{Srivastava2014}, where own-price responses can be substantially more elastic when estimated for disaggregated products rather than broad categories.

Cross-price elasticities are assigned the range
\[
[\ell_{\mathrm{cross}},r_{\mathrm{cross}}]=[-1,1].
\]
This interval allows both substitution and complementarity, since cross-price effects may have either sign, but discourages unusually large cross-product responses. The range is motivated by the empirical evidence in \cite{ToroGonzalezMcCluskeyMittelhammer2014}, who find that cross-price effects across beer types in Dominick's scanner data are weak and close to zero. Thus,
\[
\ell_{ij}
=
\begin{cases}
\ell_{\mathrm{own}}, & i=j,\\
\ell_{\mathrm{cross}}, & i\neq j,
\end{cases}
\qquad
r_{ij}
=
\begin{cases}
r_{\mathrm{own}}, & i=j,\\
r_{\mathrm{cross}}, & i\neq j.
\end{cases}
\]

The elasticity penalty is evaluated only on economically observed and model-relevant entries. Define
\[
\mathcal I_E(u,h,m)
=
\left\{
(i,i): m_i=1
\right\}
\cup
\left\{
(i,j):
(i,j)\in\mathcal P_\theta(h),
\ i\neq j,
\ m_i=1,
\ m_j=1
\right\},
\]
and let
\[
N_E = |\mathcal I_E(u,h,m)|.
\]
The elasticity-band penalty is
\begin{equation}\label{eq:loss-elasticity}
\mathcal{L}_{\mathrm{elast}}(u,h,m)
=
\frac{1}{N_E}
\sum_{(i,j)\in\mathcal I_E}
\left[
\operatorname{ReLU}
\left(
\widehat E_{\theta,ij}(u,h)-r_{ij}
\right)^2
+
\operatorname{ReLU}
\left(
\ell_{ij}-\widehat E_{\theta,ij}(u,h)
\right)^2
\right].
\end{equation}
This term is a soft constraint. It does not force elasticities to remain inside the specified ranges, and it does not impose a sign restriction on cross-price effects. Instead, it discourages economically implausible local price responses unless the demand-fit term provides sufficient evidence for them.

Taken together, the three components of the objective reflect the central design principle of ICDN: combining the flexibility of deep learning with business-informed economic structure. The neural demand surface provides the capacity to learn heterogeneous and nonlinear retail demand patterns, while the smoothness and elasticity-band penalties encode operational pricing knowledge about plausible local price responses. Thus, business theory is not used only as a post-hoc interpretation of the learned elasticities, but is embedded directly into the construction and training of the model.

\section{Numerical Experiments}
\label{sec:experiments}

We evaluate ICDN on the Dominick's Finer Foods (DFF) scanner dataset for the beer category, which contains weekly store-by-product observations on unit movement, prices, and promotional activity\footnote{For details on the Dominick's dataset and variable definitions, see \cite{kilts_dominicks_dataset,kilts_dominicks_manual_2018}.}. The empirical goal is to assess whether the integrable neural demand representation introduced in Section~\ref{sec:nn} improves predictive generalization and yields more stable model-implied elasticity estimates than a classical store-SKU log-log benchmark. Since the data are observational, this evaluation should be understood as a comparison of fitted demand surfaces and their local derivatives, not as a causal identification exercise.

To remain consistent with the notation used throughout the paper, we work in log-price coordinates and model log-demand. After preprocessing, let \(p\) denote the effective liter-normalized price vector and let \(v\) denote demand measured in liters. The model input is therefore
\[
u=\log p,
\qquad
y=\log v.
\]
Moreover, since products in the DFF dataset are indexed by UPC codes, we refer to the SKU unit as a UPC throughout the empirical analysis.

\subsection{Preprocessing and targets}
\label{subsec:preprocessing}

The raw DFF records contain multi-item deals and heterogeneous package sizes. Therefore, before training, both prices and quantities must be put on a common unit scale. Let \texttt{total\_price} denote the total price of a deal and let \texttt{units\_per\_deal} denote the number of UPC units included in that deal. We define the deal-corrected effective price per UPC as
\begin{equation}
p_{\mathrm{upc}}
=
\frac{\texttt{total\_price}}{\texttt{units\_per\_deal}}.
\end{equation}

Because package sizes vary substantially across products, for example \texttt{6/12OZ}, \texttt{12/12OZ}, and \texttt{750ML}, quantities are normalized to liters. We transform \texttt{pack\_size\_text} into liters per UPC, denoted by \(\ell_{\mathrm{upc}}\), allowing for both single containers and multipacks. The unit conversions are
\[
1\,\mathrm{oz}\approx 0.0295735\,\mathrm{L},
\qquad
1\,\mathrm{gal}\approx 3.78541\,\mathrm{L}.
\]
Liters sold and liter-normalized prices are then defined as
\begin{equation}
q_{\mathrm{L}}
=
\texttt{units\_sold}\cdot \ell_{\mathrm{upc}},
\qquad
p_{\mathrm{L}}
=
\frac{p_{\mathrm{upc}}}{\ell_{\mathrm{upc}}}.
\end{equation}
The target and main price input are expressed in log-log coordinates:
\begin{equation}
y := \log q_{\mathrm{L}},
\qquad
u := \log p_{\mathrm{L}}.
\end{equation}
Observations with non-positive prices or quantities are discarded, since logarithms and elasticities are not well defined in those cases. Consequently, the fitted demand surface should be interpreted as a model of positive observed log-demand, conditional on an UPC being observed with positive sales, rather than as a full purchase/no-purchase, zero-sales, stockout, or assortment-availability process. We also exclude observations flagged by \texttt{exclude\_flag}, which identifies records that are potentially inconsistent or unreliable for modeling. This flag is used purely as a data-quality screen, not as an economic treatment indicator.

The final dataset schema is organized by feature family in 
Tables~\ref{tab:final-dl-schema-identifiers}-\ref{tab:final-dl-schema-competitive-static}. 
Specifically, Table~\ref{tab:final-dl-schema-identifiers} reports the identifier and product-descriptor variables, Table~\ref{tab:final-dl-schema-price} summarizes the target, price, and promotion variables, Table~\ref{tab:final-dl-schema-temporal} contains the temporal and lifecycle controls, Table~\ref{tab:final-dl-schema-demand-memory} reports the demand-memory and missingness variables, Table~\ref{tab:final-dl-schema-competitive-dynamic} contains the dynamic competitive-neighborhood features, and Table~\ref{tab:final-dl-schema-competitive-static} summarizes static assortment and neighbor-entry variables.

The exploratory data analysis---see Appendix \ref{sec:eda}---plays two roles in the empirical design. First, it determines the sample restrictions used for estimation. Second, it motivates the groups of covariates used as inputs to ICDN. The raw store-UPC panel is highly uneven and temporally irregular, with many series that are either too short or too fragmented to support reliable within-series elasticity estimation. In addition, although the panel contains substantial price movement overall, a non-negligible fraction of store-UPC series exhibits insufficient within-series price variation to identify own-price effects. Finally, price changes are often strongly linked to promotional status, creating promo-price confounding and weakening the interpretation of observational price responses.

For these reasons, the modeling sample is constructed using identification-oriented filters. We retain only store-UPC series with at least 52 observed weeks, within-span coverage of at least 0.75, at least 3 distinct price values, at least 5 price changes, and a within-series log-price range of at least 0.15. To reduce promo-driven confounding, we additionally remove series with absolute price-promotion correlation above 0.80 or with more than 80\% of price changes coinciding with promotion-state switches.

We then identify and remove log-price outliers within each UPC, since atypical price realizations are especially problematic for elasticity learning. Overall, the preprocessing pipeline reduces the sample from 1,966,147 basic-quality-filtered rows to 463,722 cleaned observed store-UPC-week observations. The resulting sample retains a negative ($\approx -2$) association between log price and log demand, while simple log-log OLS regressions still display only modest explanatory power. This fact is important for our application: it supports the presence of a meaningful own-price signal, but also indicates that price alone explains only part of the variation in demand.

The same EDA also motivates the final feature design. Even after filtering, the panel exhibits temporal incompleteness, strong persistence in both demand and price, visible seasonal structure, lifecycle effects at both the UPC and store-UPC levels, and substantial variation in the local competitive environment. To construct temporal features coherently, we first complete each store-UPC history on a weekly calendar grid. Lagged, rolling, seasonal, and gap-aware covariates are computed on this completed panel and then projected back onto the observed economically valid rows used for estimation. All lagged and rolling variables are constructed using past information only, with backward-looking windows that exclude the current and future weeks and are recomputed within each training split to avoid validation- or test-period leakage. As a result, ICDN is trained exclusively on real store-UPC-week observations, while still benefiting from temporal features that account for internal gaps and globally missing weeks.

The final covariate set therefore includes store, UPC, brand-family, style-segment, and category identifiers; current promotion status and store-week promotional intensity; explicit temporal controls; lifecycle variables; product-size information; autoregressive summaries of past demand; missingness indicators for lagged and rolling features; dynamic competitive-neighborhood variables; and static assortment measures. The variable \(\log q_{\mathrm{L}}\) is retained as the supervised learning target, whereas \(\log p_{\mathrm{L}}\) enters as the main economic price input from which elasticities are derived.

\subsection{Training protocol}
\label{subsec:model-training}

ICDN is trained with a staged protocol designed to stabilize optimization before exposing the model to the full nonlinear demand specification. The implementation uses two operative phases. The first phase provides an economically sensible log-linear warm start on smoothed demand targets. The second phase starts from this checkpoint and trains the full nonlinear specification on the original targets. This design avoids fitting the most flexible version of the model from a random initialization.

For each training split, spline knots---Eq.~\ref{eq:knots}---and normalization constants---Eq.~\ref{eq:normalization}---are estimated using only the corresponding training fold. The product-specific spline bases are constructed from the empirical support of each UPC's log-price, using quantile knots over the training observations. The model is then trained and validated on store-week observations in wide multi-product format. Since each store-week contains demand observations for only a subset of UPCs, the fit loss is evaluated only on observed entries through the observation mask.

The wide log-price vector is completed separately from the demand observation mask. Starting from the store-week by UPC price matrix, missing prices are first imputed within each store by forward-filling and backward-filling along the weekly calendar. Any remaining missing entries, corresponding to UPCs with no usable price history in a given store, are filled with the overall column mean for that UPC. As a result, ICDN always receives a complete log-price vector $u \in \mathbb{R}^{n}$, while the mask \(m\) only determines which UPC demand outcomes are observed and included in the fit loss. Consequently, a UPC that is not observed in a given store-week may still contribute an imputed price-state value to the price vector and, if selected in the sparse graph, to cross-price terms. These completed prices should therefore be interpreted as price-state inputs required to evaluate the multiproduct demand surface, not as direct evidence that the corresponding UPC was economically active or available in that store-week. Reported cross-price elasticity summaries are restricted to pairs for which both the demand-receiving UPC and the price-perturbing UPC are observed in the corresponding store-week.

Across both phases, training follows the same general loop. For each mini-batch, ICDN receives contextual covariates for each UPC, the completed log-price vectors, and the observation mask. The forward pass returns predicted log-demand, the elasticity terms required by the loss, and the auxiliary spline and interaction quantities entering the smoothness and elasticity-band regularizers. Optimization is performed with AdamW. Parameters are separated into decay and no-decay groups: standard network weights receive weight decay, whereas bias terms and the output layers that generate price-response coefficients are excluded from weight decay. Learning rates are reduced on validation plateaus, gradients are clipped to control instability, and the best validation checkpoint is retained. When a GPU is available, training uses automatic mixed precision.

\subsubsection{Phase 0: smoothed log-linear warm start}

The first phase trains a restricted version of ICDN whose purpose is to initialize the demand surface around a stable downward-sloping price response. In this phase, nonlinear spline parameters are frozen. In particular, the heads producing the nonlinear spline coefficients \(w_{ij}(h)\), as well as the spline-spline interaction matrices \(U^{(ij)}(h)\), are kept fixed, so the model cannot yet use the full nonlinear spline-spline structure. The resulting specification behaves as a context-dependent log-linear warm start, so that baseline demand and linear price responses are learned before nonlinear flexibility is introduced.

The own-price coefficient is initialized using the elasticity prior suggested by the EDA (Appendix \ref{sec:eda}). Let
\[
\beta_{\mathrm{EDA}}=-2
\]
denote the pooled descriptive own-price elasticity prior. The bias of the own-price head \(\beta^\text{raw}_{ii}(x)\) is initialized through the inverse softplus transformation---due to Eq.~\ref{eq:beta_ii}---so that the initial value of the constrained coefficient satisfies approximately
\[
\beta_{ii}(x)\approx \beta_{\mathrm{EDA}}
\]
for all UPCs at the beginning of training. At the same time, the weights of the output layer that generates the own-price coefficient \(\beta^\text{raw}_{ii}(x)\) are initialized to zero, so that the initial own-price slope is not driven by contextual variation before the model has learned a stable baseline. This gives the network an economically meaningful starting point rather than a random elasticity profile.

Phase~0 is trained on smoothed demand targets. Specifically, the log-demand series are sorted by store, UPC, and week and replaced, for this initialization phase, by rolling averages with an eight-week window. This smoothing attenuates high-frequency noise and allows the model to first learn the broad scale and direction of the price-demand relationship. The loss in this phase combines the Huber fit term with the soft elasticity-band penalty and the own-price smoothness penalty. The goal is therefore not to obtain the final elasticity estimates, but to produce a stable checkpoint with calibrated baseline demand and economically plausible own-price slopes.

\subsubsection{Phase 1: full nonlinear demand model}

The second phase is initialized from the best checkpoint obtained in Phase~0. The model is then trained on the original, non-smoothed demand targets. At this point, the nonlinear price-response heads $w_{ij}$ and $U_{ij}$ are unfrozen, allowing the model to learn own-price and cross-price spline effects, as well as nonlinear cross-price interactions. The same full ICDN loss remains active as in Phase~0, but it is now applied to the fully flexible nonlinear demand specification. 

The final checkpoint from Phase~1 is used for validation, test-time prediction, and elasticity extraction. Elasticities are computed only for observed UPC entries and, when cross-price elasticities are extracted, only for pairs in which both the demand-receiving UPC and the price-perturbing UPC are observed in the corresponding store-week observation.

\subsection{Hyperparameter optimization}

The main hyperparameters of ICDN are selected with Optuna \cite{optuna_2019}. The search is designed to optimize not only predictive performance, but also the economic plausibility of the derivative-based elasticities produced by the model. 

Each Optuna trial runs the complete two-phase training protocol described above and is evaluated using a tuning protocol based on three expanding temporal folds and three random seeds. In total, we run more than 100 trials under this tuning protocol. After selecting the best hyperparameter configuration, we keep these hyperparameters fixed and perform the final temporal validation and resampling analysis using a broader five-fold, five-seed evaluation protocol. 

Despite this repeated training procedure, the full hyperparameter search completed in less than one day on our computational setup, using an NVIDIA RTX 5070 Ti GPU. This indicates that the staged training pipeline remains practical for retail demand applications at this scale.

The search space includes architectural and optimization choices such as the hidden-layer configuration, dropout, batch size, the number of spline knots, phase-specific learning rates, and the regularization strengths associated with the smoothness and elasticity-band penalties. We do not select hyperparameters solely by validation error. Instead, hyperparameter search is formulated as a multi-objective maximization problem with two validation objectives: predictive fit and elasticity plausibility.

The first objective is predictive performance, measured by the global validation \(R^2\) on observed log-demand entries. For a validation set indexed by store-week observations \(b\) and UPCs \(i\), with observation mask \(m_{bi}\in\{0,1\}\), we compute
\[
R^2
=
1
-
\frac{
\sum_{b,i} m_{bi}\left(y_{bi}-\widehat y_{bi}\right)^2
}{
\sum_{b,i} m_{bi}\left(y_{bi}-\bar y_m\right)^2
},
\qquad
\bar y_m
=
\frac{\sum_{b,i}m_{bi}y_{bi}}{\sum_{b,i}m_{bi}}.
\]
Here \(y_{bi}\) and \(\widehat y_{bi}\) are the observed and predicted log-demands for UPC \(i\) in store-week \(b\), respectively, and \(\bar y_m\) is the masked validation mean computed over observed entries.

The second objective is an elasticity score, designed to summarize the economic plausibility of the model-implied price responses on validation data. This score is used only for hyperparameter selection; it is not an additional training loss. Let \(\mathcal E_{\mathrm{own}}\) denote the set of own-price elasticities evaluated on observed validation entries, and let \(\mathcal E_{\mathrm{cross}}\) denote the set of selected cross-price elasticities evaluated on observed validation product pairs. Recall that cross-price elasticities are included only when both the demand-receiving UPC and the price-perturbing UPC are observed.

The own-price component of the elasticity score combines two complementary diagnostics: local plausibility and distributional alignment. The own-price in-range share is defined as
\[
P_{\mathrm{own}}
=
\frac{1}{|\mathcal E_{\mathrm{own}}|}
\sum_{e\in\mathcal E_{\mathrm{own}}}
\mathbf{1}\{-5\leq e\leq 0\}.
\]
The interval \([-5,0]\) is used as a soft plausibility region for own-price elasticities. It is inspired by the range of disaggregated beer-demand elasticities reported in \cite{Srivastava2014}, consistently with the elasticity-band loss used in Eq.~\ref{eq:loss-elasticity}.

However, \(P_{\mathrm{own}}\) only measures the fraction of individual own-price elasticities that fall inside the plausibility band. It does not control whether the distribution as a whole is systematically shifted toward one side of the interval. For example, a model could achieve a high in-range share while producing own-price elasticities that are systematically too close to zero or systematically too negative. We therefore add a median-prior penalty that compares the center of the validation own-price elasticity distribution with the EDA-based reference value \(\beta_{\mathrm{EDA}}=-2\). Let
\[
\widetilde e_{\mathrm{own}}
=
\mathrm{median}\{e:e\in\mathcal E_{\mathrm{own}}\}.
\]
The prior-deviation penalty is
\[
P_{\mathrm{prior}}
=
\min\left\{
\frac{
\max\left(0,\left|\widetilde e_{\mathrm{own}}-\beta_{\mathrm{EDA}}\right|-0.3\right)
}{
|\beta_{\mathrm{EDA}}|
},
1
\right\}.
\]
The tolerance band of \(0.3\) allows moderate deviations from the EDA reference without penalty. Beyond this tolerance, the penalty increases with the absolute deviation of the median own-price elasticity from \(\beta_{\mathrm{EDA}}\), and is capped at one for numerical stability.

The own-price component of the elasticity score is then
\[
S_{\mathrm{own}}
=
P_{\mathrm{own}}\left(1-P_{\mathrm{prior}}\right),
\]
where \(P_{\mathrm{own}}\) controls the plausibility of individual elasticity estimates, whereas \(P_{\mathrm{prior}}\) controls systematic bias in the central tendency of the own-price elasticity distribution.

For cross-price elasticities, the score discourages extreme values. Specifically, we compute
\[
S_{\mathrm{cross}}
=
\frac{1}{|\mathcal E_{\mathrm{cross}}|}
\sum_{e\in\mathcal E_{\mathrm{cross}}}
\mathbf{1}\{-1\leq e\leq 1\},
\]
with \(S_{\mathrm{cross}}=1\) if no cross-price elasticity is available in a given validation evaluation.The interval \([-1,1]\), which is also used in the elasticity-band loss in Eq.~\ref{eq:loss-elasticity}, is motivated by the empirical evidence in \cite{ToroGonzalezMcCluskeyMittelhammer2014}, where cross-price effects across beer types in Dominick's data are found to be weak and close to zero. 

The final elasticity score combines own-price and cross-price plausibility:
\[
S_{\mathrm{elast}}
=
0.7\,S_{\mathrm{own}}
+
0.3\,S_{\mathrm{cross}}.
\]
The larger weight on \(S_{\mathrm{own}}\) reflects the fact that own-price effects are typically better identified and less noisy in observational retail data, whereas cross-price effects are weaker, more heterogeneous, and more difficult to estimate reliably.

For each Optuna trial \(t\), both \(R^2\) and \(S_{\mathrm{elast}}\) are computed across temporal folds and random seeds. Let \(\mathcal H_t\) denote the set of fold-seed evaluations associated with trial \(t\). We summarize each objective by a robust score that rewards high average performance while penalizing variability across folds and seeds:
\[
R_{\mathrm{robust}}^{2,(t)}
=
\overline{R^2}^{(t)}
-
0.25\,\mathrm{sd}_{h\in\mathcal H_t}\left(R_h^2\right),
\]
and
\[
S_{\mathrm{elast,robust}}^{(t)}
=
\overline{S}_{\mathrm{elast}}^{(t)}
-
0.25\,\mathrm{sd}_{h\in\mathcal H_t}\left(S_{\mathrm{elast},h}\right),
\]
where the bars denote averages over the fold-seed evaluations \(h\in\mathcal H_t\).

After the search, we select the final configuration using a scalar robust ranking criterion. Let \(\mathcal T\) denote the set of completed candidate trials considered for final selection. For each trial \(t\in\mathcal T\), we define
\[
S_{\mathrm{select}}^{(t)}
=
R_{\mathrm{robust}}^{2,(t)}
+
S_{\mathrm{elast,robust}}^{(t)}.
\]
The selected configuration is
\[
t^\star
=
\arg\max_{t\in\mathcal T}
S_{\mathrm{select}}^{(t)}.
\]
Table~\ref{tab:best-hparams} reports the selected hyperparameter configuration together with its validation performance.
\begin{table}[h!]
\centering
\caption{Selected hyperparameter configuration from the Optuna search. Validation performance is reported under the tuning protocol used during hyperparameter optimization, based on three temporal folds and three random seeds.}
\label{tab:best-hparams}
\small
\begin{tabular}{l c}
\toprule
Hyperparameter / metric & Value \\
\midrule
Selected trial & 28 \\
Selection score & 1.5826 \\
Mean validation \(R^2\) & 0.6636 \\
Std. validation \(R^2\) & 0.1020 \\
Mean elasticity score & 0.9505 \\
Std. elasticity score & 0.0240 \\
Batch size & 256 \\
Dropout & 0.2547 \\
Hidden layers & \((256,128,64)\) \\
Number of spline knots \(K\) & 3 \\
\(\mathrm{LR}_{P0}\) & \(1.686\times 10^{-3}\) \\
\(\mathrm{LR}_{P1}\) & \(1.625\times 10^{-3}\) \\
\(\lambda_{\mathrm{sm}}\) & \(3.514\times 10^{-2}\) \\
\(\lambda_{\mathrm{elast}}\) & \(4.450\times 10^{-2}\) \\
\bottomrule
\end{tabular}
\end{table}

\subsection{Temporal validation and resampling}

After hyperparameter optimization, the selected ICDN configuration is kept fixed for the final evaluation. The final protocol follows the same expanding-window validation logic used during tuning, but is deliberately broader: instead of the three temporal folds and three random seeds used in Optuna, we evaluate the selected model over five expanding temporal folds and five random seeds. Each temporal fold trains on an initial block of weeks and validates on a later block, with the training window expanding across folds. Validation metrics are computed only on observed entries, using the same observation mask as in training.

This expanded fold-seed design separates hyperparameter selection from final assessment. Temporal folds evaluate generalization to later time periods, while random seeds assess robustness to stochastic elements of deep-learning training, including parameter initialization, mini-batch ordering, dropout, and the online sparse-neighbor selection path. These stochastic training paths are not part of the reported ICDN demand specification; the repetitions are used only as evaluation diagnostics for predictive robustness and elasticity reproducibility.

In addition, we use block bootstrap resampling over training weeks to assess the sampling stability of the elasticity estimates. Rather than resampling individual store-week observations independently, weeks are grouped into non-overlapping temporal blocks and sampled with replacement. This preserves short-run temporal dependence within each resampled block while generating alternative training samples. Repeating the training procedure over bootstrap samples produces a distribution of predicted elasticities, from which bootstrap standard deviations and confidence intervals are computed. The bootstrap runs are therefore used only as uncertainty and stability diagnostics for the estimated elasticities, not to define the demand model itself.

\subsection{Evaluation benchmark}
\label{subsec:evaluation}

This benchmark is intended as a classical, interpretable econometric reference rather than as an exhaustive comparison against all possible machine-learning demand models. Accordingly, the empirical results below should be interpreted as evidence relative to this directed log-log baseline, not as a claim of general superiority over alternative demand-first neural networks, or other flexible predictive architectures.

Starting from the processed weekly dataset, each row corresponds to a \((\texttt{store\_code}\), \(\texttt{upc\_code}\), \(\texttt{week\_id})\) observation. We transform this long panel into a directed pairwise dataset. For each store-week, and for each ordered pair of distinct UPCs \((i,j)\), we create one row containing the demand and own price of UPC \(i\), together with the price of UPC \(j\). Thus, for each directed pair, UPC \(i\) is the demand-receiving product and UPC \(j\) is the price-perturbing product. 

For each store \(s\) and directed UPC pair \((i,j)\), we estimate the log-log regression
\begin{equation}
\log v_{s t i}
=
\beta_{0,sij}
+
\beta^{\mathrm{own}}_{sij}\log p_{s t i}
+
\beta^{\mathrm{cross}}_{sij}\log p_{s t j}
+
z_{s t i}^{\top}\gamma_{sij}
+
\varepsilon_{s t i}.
\label{eq:benchmark-pairwise-ols}
\end{equation}
Here \(v_{s t i}\) denotes liters sold for UPC \(i\) in store \(s\) and week \(t\), \(p_{s t i}\) is the price per liter of UPC \(i\), and \(p_{s t j}\) is the price per liter of candidate UPC \(j\) in the same store-week. The term \(\beta_{0,sij}\) is the intercept for store \(s\) and directed pair \((i,j)\), capturing the baseline log-demand level in that local regression. The vector \(z_{s t i}\) contains the benchmark controls listed in Table~\ref{tab:benchmark-controls}---all defined for the demand-receiving UPC \(i\)---, and \(\varepsilon_{s t i}\) denotes the regression disturbance, capturing residual variation in log-demand not explained by prices and controls. The coefficient \(\beta^{\mathrm{own}}_{sij}\) is interpreted as the own-price elasticity of UPC \(i\) within this pairwise specification, while \(\beta^{\mathrm{cross}}_{sij}\) is the directed cross-price elasticity measuring the local association between the price of UPC \(j\) and the demand of UPC \(i\).

The vector \(z_{s t i}\) contains a compact set of control variables designed to make the pairwise OLS benchmark interpretable and numerically stable. It is not formed by using all variables in the final deep-learning dataset reported in Tables~\ref{tab:final-dl-schema-price}--\ref{tab:final-dl-schema-competitive-static}. Using the full feature set would make the benchmark unnecessarily unstable, since many covariates are highly related by construction, including lagged and rolling demand summaries, seasonal controls, promotion variables, and competitive-neighborhood aggregates. This is especially problematic because the benchmark fits a separate regression for each store and directed product pair. In these smaller regressions, many collinear controls can make the estimates unstable. We therefore define \(z_{s t i}\) as a compact, collinearity-aware subset of controls selected from the final feature schema.

\begin{table}[h!]
\centering
\scriptsize
\setlength{\tabcolsep}{4pt}
\renewcommand{\arraystretch}{1.15}
\caption{Control variables used in the pairwise OLS benchmark. The benchmark uses a compact, collinearity-aware subset of the final feature set rather than all available covariates.}
\label{tab:benchmark-controls}
\begin{tabularx}{\textwidth}{@{}p{0.43\textwidth}X@{}}
\toprule
\textbf{Variable(s)} & \textbf{Definition and motivation} \\
\midrule

\path|on_promo|
& Promotion indicator. Controls for the focal product's promotional state, which is strongly correlated with price. \\

\path|week_rank|
& Gap-free sequential time index. Captures smooth aggregate time trends in the weekly panel. \\

\begin{minipage}[t]{\linewidth}
\path|sin_52|\\
\path|cos_52|
\end{minipage}
& Annual Fourier seasonality terms. Capture yearly demand patterns. \\

\begin{minipage}[t]{\linewidth}
\path|sin_13|\\
\path|cos_13|
\end{minipage}
& Quarterly Fourier seasonality terms. Capture shorter recurrent seasonal patterns. \\

\path|weeks_since_first_seen_store_upc|
& Weeks since the UPC first appears in the corresponding store. Controls for local product lifecycle effects such as rollout and stabilization. \\

\begin{minipage}[t]{\linewidth}
\path|lag_1_log_liters_sold|\\
\path|lag_4_log_liters_sold|
\end{minipage}
& Lagged log demand for the same store-UPC series. Captures short-run and monthly-like demand persistence. \\

\begin{minipage}[t]{\linewidth}
\path|miss_lag_1|\\
\path|miss_lag_4|
\end{minipage}
& Missingness indicators for the retained lagged demand features. \\

\path|promo_intensity_store_week|
& Share of UPCs on promotion in the same store-week. Controls for the store-wide promotional environment faced by the focal UPC. \\

\path|n_neighbors_sw_cat|
& Number of other UPCs in the same store-week-category. Measures the size of the active category-level competitive set. \\

\path|neighbor_promo_share_sw_cat|
& Share of category neighbors on promotion. Captures promotional pressure from rival UPCs in the same category. \\

\path|lag1_neighbor_mean_log_liters_sold|
& Mean lag-1 log demand across valid category neighbors. Summarizes recent local category-demand conditions. \\

\path|share_new_neighbors_13w|
& Share of neighbors that are new in-store within the previous 13 weeks. Controls for changes in the competitive set due to product introductions or activations. \\

\bottomrule
\end{tabularx}
\end{table}

To make inference less sensitive to heteroskedasticity across store-week observations, we report robust standard errors using the HC1 covariance estimator. Before fitting, rows with missing values in any required regressor are dropped. A regression is attempted only when the corresponding directed group is present in both the training and validation split, contains at least 30 clean training observations, and exhibits at least two distinct values for both the own log-price and the cross log-price.

For each valid directed regression, we retain the estimated own-price elasticity, the estimated cross-price elasticity, their HC1 robust confidence intervals and \(p\)-values, together with the number of training and validation observations and the validation-period prediction metrics. In particular, we evaluate MAE, RMSE, and validation \(R^2\) for the predicted log-demand of UPC \(i\).

\subsection{Results}
\label{sec:results}

We now evaluate whether ICDN improves out-of-sample demand prediction relative to the log-log benchmark, and whether this improvement is accompanied by more stable and economically coherent model-implied elasticity estimates. For clarity and comparability, the empirical evaluation is restricted to a five-product subset. This subset is selected to maximize store-week overlap across UPCs, so that own- and cross-price elasticities can be compared on a common set of market instances and the resulting stability diagnostics are not driven by differences in product availability. The restriction is made for expositional and diagnostic purposes: the ICDN architecture itself is not limited to five products and can be extended to larger product sets through the same sparse interaction mechanism.

All results in this section are obtained after fixing the hyperparameters selected by Optuna and re-evaluating the selected configuration under the expanded five-fold, five-seed temporal validation protocol described above. The comparison is organized in four steps. First, we compare predictive generalization across temporal validation folds. Second, we analyze own-price elasticity stability. Third, we extend the analysis to cross-price elasticities. Finally, we evaluate resampling-based uncertainty diagnostics by comparing bootstrap intervals against fold-based estimates.

\paragraph{Predictive generalization.}
Table~\ref{tab:generalization_results} reports the out-of-sample comparison between ICDN and the benchmark. When results are summarized by temporal fold, ICDN improves validation $R^2$ in all five comparable temporal folds. Because this comparison contains only five temporal folds, the paired \(t\)-tests 
are reported as descriptive diagnostics rather than as strong inferential evidence. The $R^2$ differences favor ICDN, with $t = 3.757$ and $p = 0.0198$. The MAE and RMSE differences also point in favor of ICDN, but they are less conclusive when inference is based on the five temporal folds.
\begin{table}[h!]
\centering
\caption{Out-of-sample generalization comparison between ICDN and the benchmark. Negative \(\Delta\)MAE and \(\Delta\)RMSE indicate lower error for ICDN. Positive \(\Delta R^2\) indicates better fit for ICDN.}
\label{tab:generalization_results}
\small
\begin{tabular}{lcc}
\toprule
Metric / test & Value & Interpretation \\
\midrule
Comparable folds & 5 & Matched temporal folds \\
Paired \(t\)-test on \(\Delta R^2\) & \(t = 3.757,\; p = 0.0198\) & ICDN significantly better \\
Paired \(t\)-test on \(\Delta\)MAE & \(t = -2.452,\; p = 0.0703\) & Favors ICDN; not significant at 5\% \\
Paired \(t\)-test on \(\Delta\)RMSE & \(t = -2.035,\; p = 0.1116\) & Favors ICDN; not significant at 5\% \\
\midrule
Comparable \((\text{store},\text{UPC},\text{fold})\) triplets & 1241 & Pair-level comparison \\
ICDN wins in MAE & 63.2\% & Lower MAE than benchmark \\
ICDN wins in RMSE & 63.0\% & Lower RMSE than benchmark \\
Median \(\Delta\)MAE & \(-0.0285\) & Favors ICDN \\
Median \(\Delta\)RMSE & \(-0.0332\) & Favors ICDN \\
Wilcoxon on \(\Delta\)MAE & \(p = 1.16\times 10^{-38}\) & Favors ICDN \\
Wilcoxon on \(\Delta\)RMSE & \(p = 1.83\times 10^{-38}\) & Favors ICDN \\
\bottomrule
\end{tabular}
\end{table}

The advantage of ICDN becomes clearer in the finer \((\text{store}, \text{UPC}, \text{fold})\) comparison. Across 1{,}241 matched triplets, ICDN attains lower MAE in 63.2\% of cases and lower RMSE in 63.0\% of cases. The median improvements are $-0.0285$ for MAE and $-0.0332$ for RMSE, where negative values indicate lower prediction error for ICDN. We report Wilcoxon signed-rank tests as descriptive matched-sample diagnostics, since the triplets are not independent due to repeated stores and UPCs, and temporal dependence across folds. Under this diagnostic comparison, both MAE and RMSE differences favor ICDN. Figure~\ref{fig:generalization_results} visualizes this pattern: \(\Delta R^2\) is positive in every fold, and the distributions of \(\Delta \mathrm{MAE}\) and \(\Delta \mathrm{RMSE}\) across matched \((\text{store}, \text{UPC}, \text{fold})\) triplets are shifted to the left of zero.
\begin{figure}[h!]
    \centering
    \includegraphics[width=\textwidth]{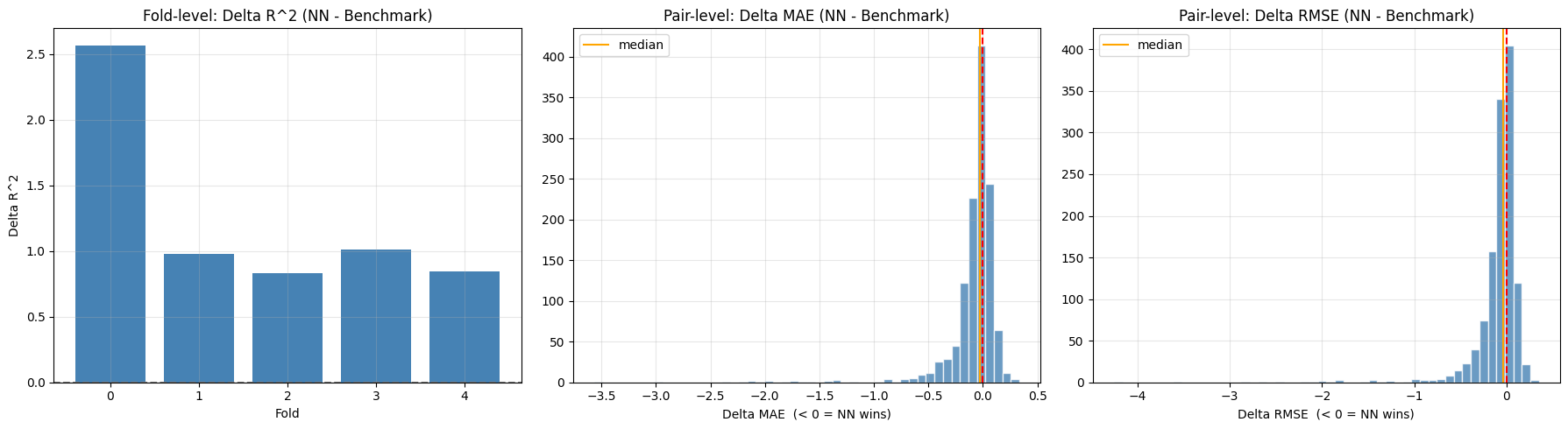}
    \caption{Generalization comparison between ICDN and the benchmark. Left: fold-level \(\Delta R^2\), computed as ICDN minus benchmark. Middle and right: pair-level histograms of \(\Delta\)MAE and \(\Delta\)RMSE, where values below zero indicate lower prediction error for ICDN.}
    \label{fig:generalization_results}
\end{figure}

Taken together, these descriptive diagnostics suggest that ICDN generalizes better than the log-log benchmark in the temporal validation comparisons considered here. Better out-of-sample generalization therefore provides evidence that the learned surface is less driven by sample-specific noise, which in turn supports more stable derivative-based elasticity estimates.

\paragraph{Own-price elasticity stability.}
Table~\ref{tab:own_stability_results} summarizes the stability diagnostics for own-price elasticities. In the bootstrap comparison over 212 matched store-UPC series, ICDN yields narrower confidence intervals in 83.5\% of cases and lower bootstrap standard deviations in 85.4\% of cases. The difference in dispersion is substantial: the median confidence interval width is 1.018 for ICDN and 2.683 for the benchmark, while the median bootstrap standard deviation is 0.252 for ICDN and 0.789 for the benchmark.
\begin{table}[h!]
\centering
\caption{Own-price elasticity stability comparison between ICDN and the benchmark.}
\label{tab:own_stability_results}
\small
\begin{tabular}{lcc}
\toprule
Metric & ICDN & Benchmark \\
\midrule
Matched series (bootstrap) & \multicolumn{2}{c}{212} \\
Narrower CI rate & 83.5\% & 16.5\% \\
Lower bootstrap std rate & 85.4\% & 14.6\% \\
Same-sign rate & \multicolumn{2}{c}{99.1\%} \\
Negative own-elasticity rate & 100.0\% & 99.1\% \\
Median CI width & 1.018 & 2.683 \\
Median bootstrap std & 0.252 & 0.789 \\
\midrule
Series with at least 3 folds & \multicolumn{2}{c}{272} \\
More stable across folds & 82.7\% & 17.3\% \\
Median inter-fold std & 0.2557 & 0.6026 \\
\bottomrule
\end{tabular}
\end{table}

The two approaches also display strong directional agreement. The same-sign rate is 99.1\%, indicating that the models almost always agree on the sign of the own-price response. Moreover, ICDN satisfies the downward-sloping demand prior in all matched cases, with 100\% negative own-price elasticities, compared with 99.1\% for the benchmark. This is consistent with the sign-constrained own-price head in Eq.~\ref{eq:beta_ii}, together with the soft own-price elasticity-band penalty imposed during training.

Temporal stability shows the same pattern. Among the 272 store-UPC series observed in at least three folds, ICDN is more stable across folds in 82.7\% of cases. The median inter-fold standard deviation is 0.2557 for ICDN and 0.6026 for the benchmark. Figure~\ref{fig:own_stability_results} illustrates these diagnostics. The confidence-interval scatterplot is concentrated below the diagonal, indicating narrower ICDN intervals for most matched series; the point-estimate scatterplot shows strong sign agreement; and the inter-fold dispersion boxplot shows a tighter distribution for ICDN, while the benchmark exhibits a heavier upper tail.
\begin{figure}[h!]
    \centering
    \includegraphics[width=\textwidth]{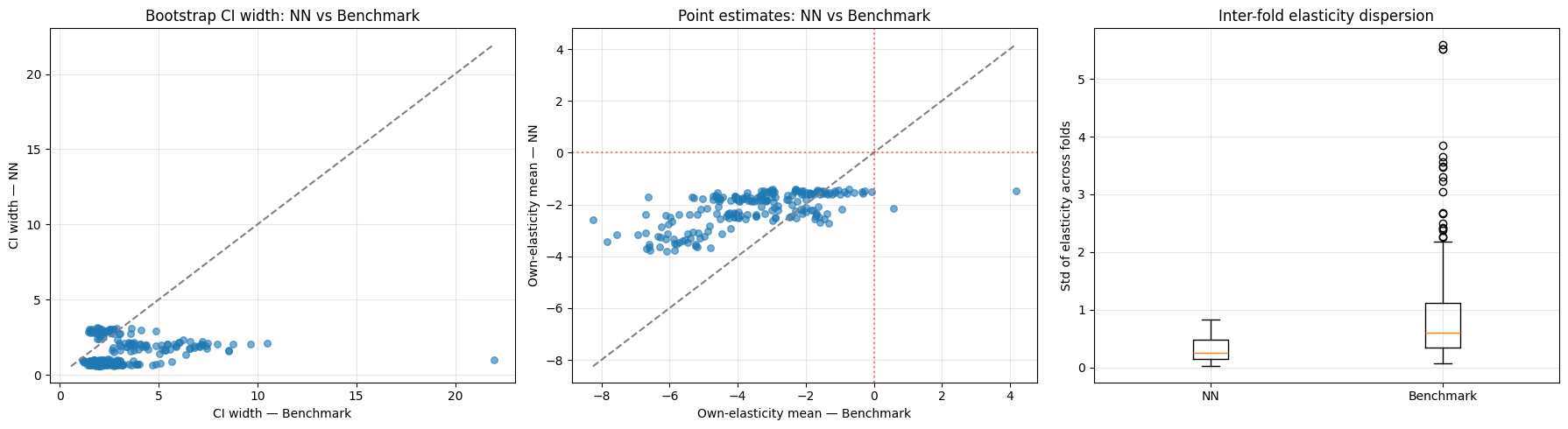}
    \caption{Own-price elasticity stability diagnostics. Left: bootstrap confidence interval width for ICDN versus the benchmark. Middle: mean own-price elasticity estimates from both methods. Right: inter-fold standard deviation of own-price elasticities, showing lower temporal dispersion for ICDN and a heavier upper tail for the benchmark.}
    \label{fig:own_stability_results}
\end{figure}

These results support the interpretation that ICDN produces own-price elasticities that are economically plausible and substantially less sensitive to both bootstrap resampling and temporal partitioning.

\paragraph{Cross-price elasticity stability.}
We next examine cross-price elasticities. This analysis is important because cross-price effects are typically weaker, noisier, and harder to identify from observational retail data than own-price effects. Table~\ref{tab:cross_stability_results} reports the bootstrap and temporal-fold stability diagnostics for matched cross-price pairs.
\begin{table}[h!]
\centering
\caption{Cross-price elasticity stability comparison between ICDN and the benchmark.}
\label{tab:cross_stability_results}
\small
\begin{tabular}{lcc}
\toprule
Metric & ICDN & Benchmark \\
\midrule
Matched cross-pairs (bootstrap) & \multicolumn{2}{c}{544} \\
Narrower CI rate & 79.8\% & 20.2\% \\
Lower bootstrap std rate & 83.3\% & 16.7\% \\
Same-sign rate & \multicolumn{2}{c}{39.9\%} \\
Positive cross-elasticity rate & 91.0\% & 33.8\% \\
Median CI width & 0.650 & 1.824 \\
Median bootstrap std & 0.159 & 0.539 \\
Median cross-price elasticity & \(0.302\) & \(-0.260\) \\
\midrule
Cross-pairs with at least 3 folds & \multicolumn{2}{c}{948} \\
More stable across folds & 75.5\% & 24.5\% \\
Median inter-fold std & 0.1703 & 0.2968 \\
\bottomrule
\end{tabular}
\end{table}

In the bootstrap comparison over 544 matched store-UPC-pair combinations, ICDN produces narrower confidence intervals in 79.8\% of cases and lower bootstrap standard deviations in 83.3\% of cases. The median confidence interval width is 0.650 for ICDN versus 1.824 for the benchmark, and the median bootstrap standard deviation is 0.159 versus 0.539. Thus, the cross-price effects implied by ICDN remain substantially more stable across bootstrap resamples than those of the benchmark, although the gain is less extreme than in the own-price case.

The average bootstrap elasticity estimates also reveal an important qualitative difference. The median cross-price elasticity is \(0.302\) for ICDN and \(-0.260\) for the benchmark. Moreover, ICDN assigns positive cross-price elasticities in 91.0\% of matched pairs, compared with 33.8\% for the benchmark. This pattern is more consistent with a substitution interpretation among beer UPCs, although it should not be read as causal evidence of substitution. The same-sign rate for cross-price elasticities is 39.9\%, substantially lower than for own-price elasticities, reflecting the greater difficulty of estimating cross-product responses and the fact that the two methods impose very different forms of regularization and interaction structure.

Temporal stability reinforces the same general conclusion. Among 948 cross-price pairs observed in at least three folds, ICDN is more stable across temporal splits in 75.5\% of cases. The median inter-fold standard deviation is 0.1703 for ICDN and 0.2968 for the benchmark. Figure~\ref{fig:cross_stability_results} shows the same pattern visually: ICDN cross-price elasticities are more concentrated, its bootstrap intervals are generally narrower, and its inter-fold dispersion is lower for most matched pairs.
\begin{figure}[h!]
    \centering
    \includegraphics[width=\textwidth]{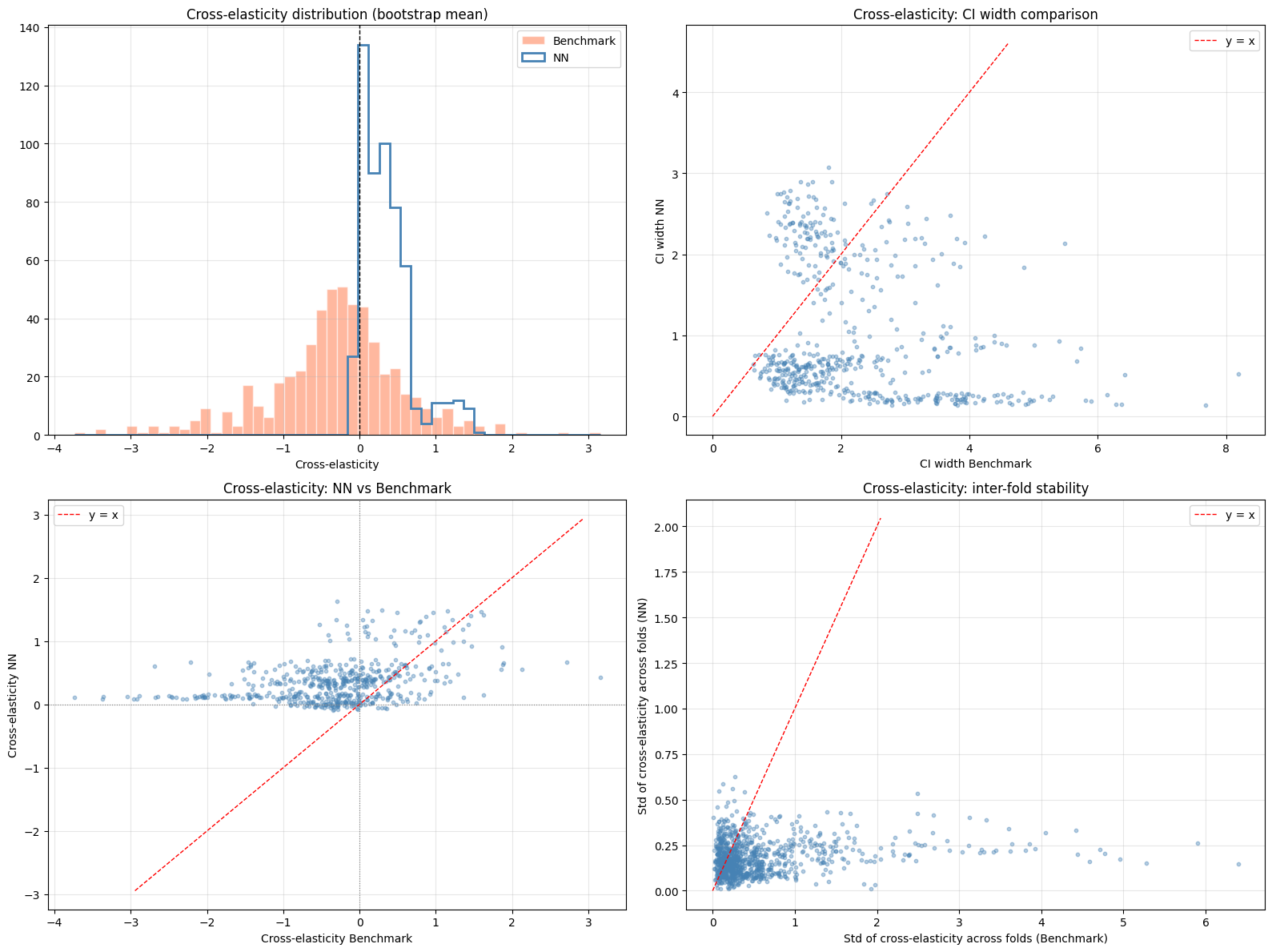}
    \caption{Cross-price elasticity diagnostics. Top left: distribution of bootstrap mean cross-price elasticities for ICDN and the benchmark. Top right: bootstrap confidence interval width comparison. Bottom left: cross-price elasticity point estimates from both methods. Bottom right: inter-fold standard deviation of cross-price elasticities.}
    \label{fig:cross_stability_results}
\end{figure}

Overall, the cross-price analysis suggests that ICDN does not merely stabilize own-price elasticities. It also regularizes the more difficult cross-price problem, producing cross effects that are more reproducible across resamples and temporal splits, while yielding a direction of cross-price response that is more consistent with substitution among related UPCs.

\paragraph{Resampling-based uncertainty diagnostics.}
Narrow bootstrap intervals indicate stability across resampled training sets, but they do not by themselves show whether uncertainty is properly sized relative to the variation observed across temporal splits. We therefore compare nominal 95\% bootstrap confidence intervals with fold-based elasticity estimates obtained from the temporal validation protocol. Table~\ref{tab:calibration_results} reports the resulting uncertainty diagnostics for both own-price and cross-price elasticities.
\begin{table}[h!]
\centering
\caption{Resampling-based uncertainty diagnostics for bootstrap elasticity intervals. Coverage compares nominal 95\% bootstrap intervals against fold-based point estimates. The dispersion ratio is defined as bootstrap standard deviation divided by inter-fold standard deviation.}
\label{tab:calibration_results}
\small
\begin{tabular}{lcc}
\toprule
Metric & ICDN & Benchmark \\
\midrule
\multicolumn{3}{l}{\textit{Own-price elasticities}} \\
95\% CI coverage over fold estimates & 99.4\% & 87.4\% \\
Number of fold estimates used & 1081 & 1049 \\
Deviation from 95\% target & \(+4.4\) p.p. & \(-7.6\) p.p. \\
Median dispersion ratio & 1.43 & 1.35 \\
\midrule
\multicolumn{3}{l}{\textit{Cross-price elasticities}} \\
95\% CI coverage over fold estimates & 83.6\% & 90.8\% \\
Number of fold estimates used & 2714 & 2698 \\
Deviation from 95\% target & \(-11.4\) p.p. & \(-4.2\) p.p. \\
Median dispersion ratio & 1.02 & 1.79 \\
\bottomrule
\end{tabular}
\end{table}
For own-price elasticities, empirical coverage is 99.4\% for ICDN and 87.4\% for the benchmark. Thus, ICDN exhibits over-coverage relative to the nominal 95\% target, whereas the benchmark under-covers. The dispersion-ratio diagnostic gives a complementary view\footnote{The dispersion ratio is the bootstrap standard deviation divided by the inter-fold standard deviation. Values near one indicate similar variability under bootstrap resampling and across temporal folds.}. The median ratio between bootstrap standard deviation and inter-fold standard deviation is 1.43 for ICDN and 1.35 for the benchmark, indicating that both methods are conservative in median dispersion for own-price elasticities. In this sense, ICDN's own-price bootstrap intervals are not under-sized; rather, they are somewhat conservative relative to the fold-based variability observed in the final evaluation.

Figure~\ref{fig:calibration_results} reports the own-price uncertainty diagnostics. The coverage distributions show that ICDN intervals frequently cover all fold-based estimates for a given series, while the benchmark has more under-covered series. The dispersion-ratio histogram shows both methods shifted above one, consistent with conservative median bootstrap dispersion for own-price elasticities.
\begin{figure}[h!]
    \centering
    \includegraphics[width=\textwidth]{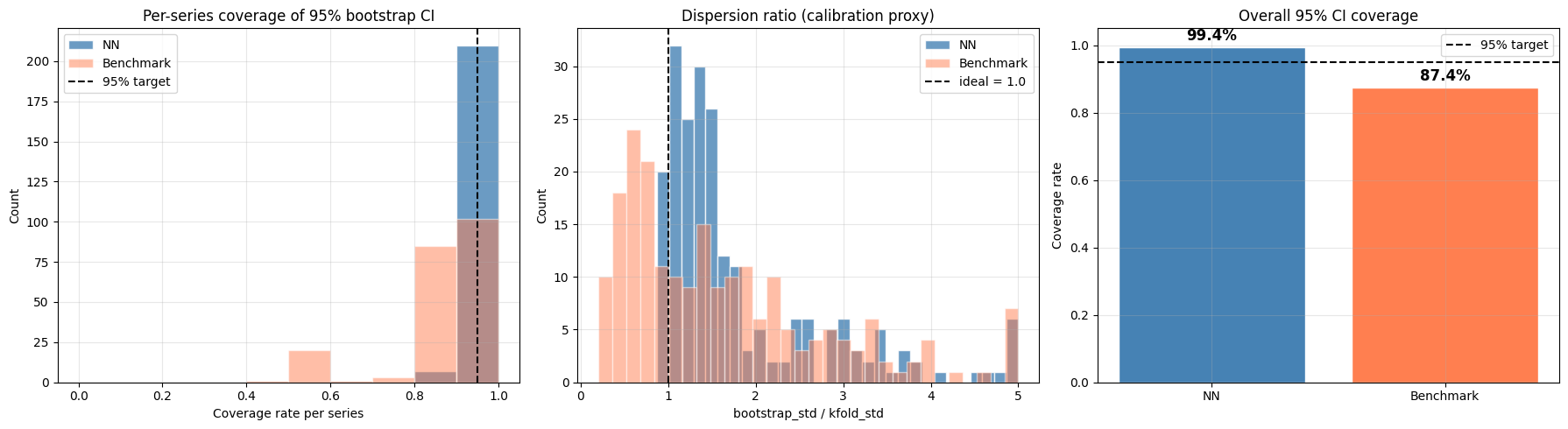}
    \caption{Own-price resampling-based uncertainty diagnostics for bootstrap confidence intervals. Left: per-series coverage distributions relative to the 95\% target. Middle: dispersion ratio, defined as bootstrap standard deviation divided by inter-fold standard deviation. Right: overall empirical coverage of nominal 95\% intervals.}
    \label{fig:calibration_results}
\end{figure}

For cross-price elasticities, the picture is different. ICDN covers 83.6\% of fold-based cross-price estimates, compared with 90.8\% for the benchmark, so both methods under-cover relative to the nominal 95\% target, with stronger under-coverage for ICDN. At the same time, ICDN's median dispersion ratio is 1.02, very close to the ideal value of one, whereas the benchmark ratio is 1.79, indicating substantially more conservative bootstrap dispersion. These results suggest that cross-price uncertainty remains harder to size: ICDN matches the median dispersion across temporal folds well, but its bootstrap intervals do not fully cover the fold-based variation.

\paragraph{Main takeaway.} Overall, the results support three conclusions. First, ICDN appears to generalize better than the log-log benchmark across the temporal validation periods considered here. Second, this predictive advantage is accompanied by substantially more stable own-price elasticities, both under bootstrap resampling 
and across temporal folds. Third, ICDN also improves the reproducibility of cross-price elasticities relative to the benchmark, although these effects remain more difficult to estimate and display weaker agreement across methods.

The uncertainty diagnostics provide an additional qualification. Own-price bootstrap intervals are conservative relative to fold-based estimates, whereas cross-price intervals show under-coverage despite a near-ideal median dispersion ratio. Thus, ICDN produces demand derivatives that generalize better, fluctuate less across 
comparable samples, and remain more consistent with economically plausible price-response patterns than those of the classical log-log benchmark considered here, but uncertainty calibration remains imperfect, especially for cross-price effects.

These findings should nevertheless be interpreted carefully. Because the empirical design is based on observational scanner data, the diagnostics do not establish recovery of causal ground-truth price responses.

\section{Conclusions}
\label{sec:conclusion}

This paper studied the problem of learning economically meaningful price elasticities in a high-dimensional multi-product retail setting. Rather than parameterizing elasticities directly, we adopted a demand-first perspective: log-demand is modeled as a smooth, context-dependent function of log-prices and covariates, and elasticities are obtained as exact derivatives of that same demand surface. This design addresses a central consistency issue in elasticity modeling. If elasticities are learned as independent local objects, they need not correspond to any globally coherent demand representation. By contrast, any sufficiently smooth demand-first construction guarantees row-wise derivative consistency when elasticities are defined as derivatives of log-demand.

To instantiate this idea, we introduced the Integrable Context-Dependent Demand Network (ICDN), a spline-based neural architecture that combines product-specific contextual conditioning, analytic price-basis derivatives, and sparse attention-modulated cross-price interactions. The model represents each SKU through a contextual token and a product-specific latent vector, generates own-price parameters from individual product representations, and constructs directed cross-price effects from ordered pairs of SKU representations. This yields heterogeneous and context-dependent elasticities while preserving analytic tractability. Importantly, ICDN does not impose Hicksian or Slutsky symmetry: cross-price elasticities are interpreted as directional local responses in the fitted demand surface, so the effect of product \(j\)'s price on product \(i\)'s demand need not equal the reverse effect.

Empirically, the results show that ICDN improves out-of-sample generalization relative to the log-log benchmark. Across matched temporal folds, ICDN achieves higher validation \(R^2\), and at the finer store-SKU-fold comparison it attains lower MAE and RMSE in a majority of matched cases. This predictive improvement is important because elasticities in ICDN are not separate fitted coefficients. A model that generalizes more robustly out of sample is therefore less likely to produce derivative estimates driven by sample-specific noise.

The own-price elasticity results strongly support this interpretation. ICDN produces own-price elasticities that remain aligned with the economic prior of downward-sloping demand, while displaying substantially lower bootstrap dispersion and lower inter-fold variability than the benchmark in most comparable series. The cross-price results are more nuanced but still favorable to ICDN. Cross-price effects are weaker and harder to estimate from observational retail data, yet ICDN produces narrower bootstrap intervals, lower resampling variance, and lower temporal dispersion for most matched directed product pairs. In contrast to the benchmark, ICDN also yields predominantly positive cross-price elasticities, which is more consistent with a substitution interpretation among related beer SKUs. This sign pattern should nevertheless be interpreted cautiously, since the estimates are local derivatives of an observationally fitted demand surface rather than causally identified substitution effects.

The resampling-based uncertainty diagnostics also call for a careful interpretation. For own-price elasticities, ICDN exhibits over-coverage relative to nominal 95\% bootstrap intervals, with conservative median dispersion compared with fold-based variability. For cross-price elasticities, ICDN achieves a median bootstrap-to-fold dispersion ratio close to one, but its intervals under-cover fold-based point estimates. Thus, the evidence supports improved stability, reproducibility, and economic coherence, but not perfect uncertainty calibration. More precisely, the results show that ICDN produces elasticities that are more reproducible across resamples and temporal splits; they do not prove that the model recovers unobserved ground-truth elasticities in an absolute causal sense.

These conclusions are directly relevant for applied pricing. In operational settings, the usefulness of an elasticity model depends not only on point prediction accuracy, but also on whether the implied local responses are stable enough to support simulation, pricing diagnostics, and scenario analysis. By tying demand predictions and elasticities to the same differentiable potential, ICDN provides a principled way to improve derivative stability while preserving predictive performance. In this empirical setting, and relative to the log-log benchmark, the resulting elasticity estimates appear better suited for pricing diagnostics and scenario analysis in which local price responses must be both interpretable and robust.

Several directions remain open for future work. First, we could improve uncertainty quantification by modeling temporal changes in demand and dependence across stores, weeks, and SKUs more explicitly. For example, future demand responses may be affected by changes in inflation, category trends, consumer preferences or competitive assortment. The bootstrap diagnostics used here capture historical sampling variability, but they may not fully represent the uncertainty involved in using elasticities for future pricing decisions. We could also combine the demand-first architecture with explicit causal identification strategies, such as randomized price experiments, to distinguish predictive price sensitivities from causal price effects. Second, the empirical comparison could be broadened beyond the classical log-log benchmark considered here. In particular, we should compare ICDN with other demand-first flexible models, including generic neural predictors with automatic differentiation, and alternative spline-based demand specifications. Third, the cross-price structure learned by ICDN could be analyzed more deeply as a directed product-interaction network, allowing substitution, complementarity, and cannibalization patterns to be studied across individual SKUs. Fourth, the framework should be evaluated in operational pricing tasks, such as revenue optimization, margin-aware pricing, or promotion planning, to assess whether more stable elasticities translate into better operational decisions. 

\section*{Code and data availability}
The code and processed data used to reproduce the preprocessing pipeline, model training, and empirical results are available on
\href{https://github.com/carlosherediapimienta/nn-elasticity}{GitHub}.

\section*{Acknowledgments}
CH: To my beloved Eva and my precious Daniela, thank you for being my endless inspiration and the driving force behind everything I do.

DR: I would like to express my gratitude to Carlos Heredia for taking the time to teach me.

Both authors gratefully acknowledge DAMM's Artificial Intelligence Department (IAMM) for the valuable discussions on this topic and for the encouragement and motivation provided throughout this work, in particular, Daniel Losada and Lluis Pallarès.

\section*{Author Contributions}
Both authors contributed equally to this work.

\section*{Funding}
The authors declare that no funding was received for this work.

\appendix

\section{Appendix}

\subsection{Exploratory Data Analysis (EDA)}
\label{sec:eda}

This appendix summarizes the exploratory analysis used to evaluate data quality, characterize the empirical price-demand relationship, and motivate the sample restrictions and feature design adopted for elasticity estimation. Throughout the EDA, we work with
\[
x := \log(\texttt{price\_per\_liter}),
\qquad
y := \log(\texttt{liters\_sold}).
\]
The objective is to document the data conditions under which own- and cross-price model-implied elasticities can be estimated more stably and reproducibly.

\subsubsection{Data quality and sample construction}

Before conducting the EDA, we apply two basic data-quality filters. First, we remove observations with \texttt{units\_sold} equal to zero, since these rows do not contribute information about positive realized demand and cannot be used in the log-demand specification. Due to this fact, the final estimation sample supports inference on positive observed log-demand only; it does not model the full process governing purchase/no-purchase outcomes, zero sales, stockouts, or assortment availability. Second, we exclude transactions with \texttt{total\_price} \(\leq 0.05\), which are likely to reflect recording errors or economically implausible price realizations.

After these filters, the dataset is unique for each \((\texttt{store\_code}, \texttt{week\_id}, \texttt{upc\_code})\) combination: the number of rows exactly matches the number of unique store-week-UPC keys. We then evaluate temporal support at both the store-week and store-UPC observations. Table~\ref{tab:eda-sample-construction} summarizes the main sample-construction steps. The initial quality-filtered sample contains 1,966,147 observations and 21,004 observed store-week combinations out of 26,878 possible combinations, corresponding to a coverage rate of 78.15\%. To ensure a minimum level of longitudinal support, we exclude stores with fewer than 150 observed weeks. This removes 19 stores and leaves a working sample of 70 stores.

\begin{table}[h!]
\centering
\caption{Sample construction and main filtering steps.}
\label{tab:eda-sample-construction}
\small
\begin{tabular}{lcc}
\toprule
Step & Rows / units retained & Comment \\
\midrule
After basic quality filters & 1,966,147 rows & Unique at store-week-UPC level \\
Observed store-weeks & 21,004 / 26,878 & 78.15\% coverage \\
Stores retained & 70 stores & Excluding stores $<$ 150 observed weeks \\
Store-UPC pairs before pair filters & 27,390 pairs & Working panel after store filter \\
After identification-oriented filters & 484,864 rows & 2,656 store-UPC pairs retained \\
After UPC-level log-price outlier removal & 463,722 rows & Final cleaned observed sample \\
\bottomrule
\end{tabular}
\end{table}

After restricting to these stores, the panel contains 27,390 store-UPC pairs. Table~\ref{tab:eda-temporal-support} shows that temporal support is highly irregular. The median store-UPC pair is observed for only 41 weeks, and the median within-span coverage ratio is 0.647, meaning that a typical pair is observed in only about two thirds of the weeks between its first and last appearance. Internal intermittency is also substantial, with a median of 24 missing weeks within the active span. These diagnostics show that the raw store-UPC panel is both uneven in length and temporally irregular, which weakens elasticity identification within store-UPC series and motivates additional filtering before model training..

\begin{table}[h!]
\centering
\caption{Temporal support of store-UPC series before identification-oriented filtering.}
\label{tab:eda-temporal-support}
\small
\begin{tabular}{lccccc}
\toprule
Metric & Mean & Median & p25 & p75 & p90 \\
\midrule
Observed weeks per store-UPC pair & 69.6 & 41 & 12 & 109 & 194 \\
Within-span coverage ratio & 0.624 & 0.647 & 0.400 & 0.877 & 0.969 \\
Missing weeks within active span & 40.5 & 24 & 8 & 57 & 107 \\
\bottomrule
\end{tabular}
\end{table}

\subsubsection{Price variation, promotions, and identification-oriented filtering}

Elasticity estimation requires sufficient within-series price movement. Table~\ref{tab:eda-price-promo} reports the main price-variation and promotion diagnostics. Although the raw panel contains meaningful price movement overall, a non-trivial fraction of series remains weakly informative: 3,859 store-UPC pairs, corresponding to 14.09\% of the total, exhibit constant prices throughout their observed history. The median within-series log-price range is 0.252, implying a median max/min price ratio of approximately \(\exp(0.252)\approx 1.287\).

A more important identification concern is the strong link between prices and promotions. Promotional incidence is dominated by \texttt{promo\_B}, while \texttt{promo\_S} and \texttt{promo\_C} are rare. Promotional weeks are typically associated with lower prices, and the pair-level correlation between log-price and \texttt{on\_promo} is strongly negative. Moreover, a large share of price changes coincides with promotion-state switches. These patterns imply that naive price variation is not always cleanly separable from promotional activity.

\begin{table}[h!]
\centering
\caption{Price variation and promotion-related diagnostics.}
\label{tab:eda-price-promo}
\small
\begin{tabular}{lccccc}
\toprule
Metric & Mean & Median & p25 & p75 & p90 \\
\midrule
Distinct log-price levels & 6.53 & 5 & 2 & 9 & 15 \\
Price changes & 21.8 & 11 & 2 & 32 & 65 \\
Within-series log-price range & 0.233 & 0.252 & 0.134 & 0.337 & 0.406 \\
Price-promotion correlation & -0.759 & -0.810 & - & - & - \\
Share of price changes at promo switch & 0.628 & 0.714 & 0.526 & 0.835 & 1.000 \\
\midrule
Constant-price store-UPC pairs & \multicolumn{5}{c}{3,859 pairs (14.09\%)} \\
Promo \texttt{B} incidence & \multicolumn{5}{c}{27.1\% of observations} \\
Promo \texttt{S} incidence & \multicolumn{5}{c}{0.5\% of observations} \\
Promo \texttt{C} incidence & \multicolumn{5}{c}{0.02\% of observations} \\
Average promo-price gap & \multicolumn{5}{c}{-0.160 log-price units} \\
Median promo-price gap & \multicolumn{5}{c}{-0.168 log-price units} \\
Series with higher promo prices & \multicolumn{5}{c}{1.58\%} \\
\bottomrule
\end{tabular}
\end{table}

Taken together, these diagnostics show that not all store-UPC pairs are equally informative for elasticity estimation. We therefore construct the modeling sample using identification-oriented filters. We retain only store-UPC series with at least 52 observed weeks, within-span coverage of at least 0.75, at least 3 distinct price values, at least 5 price changes, and a within-series log-price range of at least 0.15. To reduce promo-driven entanglement, we additionally require the absolute correlation between log price and \texttt{on\_promo} to be at most 0.80, and the share of price changes coinciding with promotion-state switches to be at most 0.80. This filtering step retains 2,656 store-UPC pairs and reduces the sample from 1,905,978 rows to 484,864 rows, yielding a panel with stronger temporal support, more meaningful within-series price variation, and less severe promo-price collinearity.

\subsubsection{Outliers and descriptive price-demand evidence}

We next examine extreme observations in log price and log demand using the IQR rule. Table~\ref{tab:eda-price-demand} summarizes the resulting outlier rates and descriptive price-demand evidence. Outliers are present, but they do not dominate the filtered panel. Under a global IQR criterion, 3.95\% of observations are flagged as outliers in \texttt{log\_price\_per\_liter}, compared with 0.57\% in \texttt{log\_liters\_sold}. When the diagnostic is computed within \texttt{upc\_code}, the outlier rate is 4.36\% for log price and 0.49\% for log demand. Since atypical price realizations are especially problematic for elasticity learning, we remove outliers computed within each UPC in \texttt{log\_price\_per\_liter}. This reduces the sample from 484,864 rows to 463,722 rows.

\begin{table}[h!]
\centering
\caption{Outliers and descriptive price-demand evidence in the cleaned sample.}
\label{tab:eda-price-demand}
\small
\begin{tabular}{lc}
\toprule
Diagnostic & Value \\
\midrule
Global log-price outlier rate & 3.95\% \\
Global log-demand outlier rate & 0.57\% \\
Within-UPC log-price outlier rate & 4.36\% \\
Within-UPC log-demand outlier rate & 0.49\% \\
Rows after UPC-level log-price outlier removal & 463,722 \\
\midrule
Spearman correlation: log price vs. log demand & -0.505 \\
Within-store-UPC demeaned Spearman correlation & -0.451 \\
Non-promotional Spearman correlation & -0.442 \\
\midrule
Pooled log-log elasticity & -1.98 \\
Two-way fixed-effects log-log elasticity & -2.97 \\
Pooled \(R^2\) & 0.229 \\
Two-way fixed-effects \(R^2\) & 0.235 \\
\bottomrule
\end{tabular}
\end{table}

The cleaned sample retains a robust negative association between price and demand. The global Spearman correlation between log-price and log-demand is \(-0.505\). After within-\((\texttt{store},\texttt{UPC})\) demeaning, the correlation remains negative at \(-0.451\), indicating that the inverse association is not driven solely by cross-series composition. Restricting to non-promotional observations yields a similarly negative correlation of \(-0.442\), suggesting that the signal persists outside promotional regimes.

As an additional descriptive benchmark, we estimate simple log-log OLS specifications relating \(\log(\texttt{liters\_sold})\) to \(\log(\texttt{price\_per\_liter})\). The pooled regression yields an elasticity of approximately \(-1.98\), whereas a two-way fixed-effects specification, absorbing store-UPC heterogeneity and week effects, produces a more negative estimate of about \(-2.97\). The explanatory power remains modest in both cases. These patterns support the existence of a robust negative own-price signal, while also showing that price alone explains only part of the variation in log demand. This motivates a richer nonlinear specification with contextual, temporal, promotional, and competitive controls.

\subsubsection{Calendar completion and temporal feature construction}

After outlier filtering, the working panel still exhibits substantial temporal incompleteness. The cleaned observed sample contains 463,722 rows across 70 stores and 203 UPCs, spanning \texttt{week\_id} values from 91 to 399. Although this range contains 309 possible week identifiers, only 302 weeks are actually observed. Seven weeks are globally missing from the dataset altogether:
\[
\texttt{week\_id}\in\{219,262,263,264,265,284,285\}.
\]
To make temporal structure explicit and to compute lagged and rolling features on a coherent weekly grid, we expand each store-UPC history to its full weekly span. The inserted rows are marked as calendar-completion rows rather than treated as real economic observations.

\begin{table}[h!]
\centering
\caption{Calendar completion and temporal-gap structure.}
\label{tab:eda-calendar-completion}
\small
\begin{tabular}{lc}
\toprule
Quantity & Value \\
\midrule
Cleaned observed rows before calendar completion & 463,722 \\
Rows after calendar completion & 532,965 \\
Synthetic rows added & 69,243 \\
Rows from globally missing weeks & 15,816 \\
Rows from internal store-UPC gaps & 53,427 \\
Globally missing week IDs & 219, 262-265, 284-285 \\
Missingness in economic fields after completion & 12.99\% \\
Observed weeks in cleaned panel & 302 \\
Stores & 70 \\
UPCs & 203 \\
\bottomrule
\end{tabular}
\end{table}

Table~\ref{tab:eda-calendar-completion} summarizes the calendar-completion step. The expansion increases the panel from 463,722 observed rows to 532,965 rows by adding 69,243 synthetic rows. Of these, 15,816 correspond to globally missing weeks and 53,427 correspond to internal gaps within individual store-UPC series. Price, quantity, and promotion-related variables are intentionally left missing on synthetic rows; as a result, these economic fields exhibit 12.99\% missingness in the calendar-completed panel.

The completed panel also reveals strong temporal structure. When aggregated by week, \texttt{week\_rank}, a gap-free sequential time index constructed after calendar completion, is positively correlated with mean log-price, with correlation 0.82; negatively correlated with aggregate log demand, with correlation \(-0.74\); and positively correlated with promotional intensity, with correlation 0.32. In addition, both demand and price display strong persistence: their autocorrelations are 0.96 and 0.98 at lag 1, respectively, and remain 0.32 and 0.33 at lag 52. These patterns motivate the temporal covariates used in the final model, including \texttt{week\_rank}, Fourier seasonality terms with periods 52, 26, and 13 weeks, lifecycle counters, lagged log-demand features, rolling summaries, missingness indicators, and store-week promotional intensity.

\subsubsection{Final modeling sample and feature set}

Because temporal covariates are constructed on the calendar-completed panel, the final estimation sample is obtained by projecting those features back onto observed economically valid rows. Specifically, we exclude all rows created during calendar completion, remove observations belonging to globally missing weeks, and retain only rows with non-missing and finite values for both \(\log(\texttt{liters\_sold})\) and \(\log(\texttt{price\_per\_liter})\). ICDN is therefore trained exclusively on real store-UPC-week observations, while still benefiting from temporal features computed on the calendar-aware panel.

The final schema is organized by feature family in Tables~\ref{tab:final-dl-schema-identifiers}-\ref{tab:final-dl-schema-competitive-static}. It includes store and UPC identifiers; product descriptors such as brand family, style segment, category, and package size; the supervised target \(\log q_{\mathrm{L}}\); the main price input \(\log p_{\mathrm{L}}\); current promotion status and store-week promotional intensity; temporal and lifecycle controls; autoregressive summaries of past demand; missingness indicators for lagged and rolling features; dynamic competitive-neighborhood variables; and static assortment measures.

This feature design follows directly from the EDA. The cleaned sample retains a robust negative price-demand association, while demand and price display strong persistence and seasonality. Promotional activity remains sufficiently intertwined with price to justify explicit promotion controls, and the competitive environment varies meaningfully across store-week-category series. The variable \(\log q_{\mathrm{L}}\) is retained as the supervised learning target, whereas \(\log p_{\mathrm{L}}\) enters as the main economic price input from which elasticities are derived.

\clearpage

\small
\setlength{\tabcolsep}{4pt}
\renewcommand{\arraystretch}{1.15}


\begin{table}[p]
\centering
\scriptsize
\caption{Final dataset schema: identifiers and product descriptors.}
\label{tab:final-dl-schema-identifiers}
\begin{tabularx}{\textwidth}{@{}p{0.43\textwidth}X@{}}
\toprule
\textbf{Variable(s)} & \textbf{Definition and motivation} \\
\midrule

\path|store_code|
& Store identifier. Captures persistent store-specific heterogeneity in baseline demand, assortment, and local consumer behavior. \\

\path|upc_code|
& Product identifier. Captures UPC-level heterogeneity in baseline demand, product appeal, and price sensitivity. \\

\path|brand_family_norm|
& Normalized brand-family label. Provides brand membership information and supports same-brand neighborhood features. \\

\path|style_segment_norm|
& Normalized style-segment label. Adds product-descriptor information beyond category, allowing the model to distinguish heterogeneous product styles. \\

\path|category_code|
& Product-category identifier. Defines higher-level product groupings and category-based competitive neighborhoods. \\

\path|liters_per_upc|
& Liters contained in one sellable UPC unit. Controls for package-size differences and supports price and quantity normalization. \\

\bottomrule
\end{tabularx}
\end{table}


\begin{table}[p]
\centering
\scriptsize
\caption{Final dataset schema: target, price, and promotion variables.}
\label{tab:final-dl-schema-price}
\begin{tabularx}{\textwidth}{@{}p{0.43\textwidth}X@{}}
\toprule
\textbf{Variable(s)} & \textbf{Definition and motivation} \\
\midrule

\path|log_liters_sold|
& Log demand in liters. This is the supervised learning target \(y\), retained for training and evaluation but not used as an input when predicting demand. \\

\path|log_price_per_liter|
& Log price per liter. This is the main economic price input $u$, from which own-price and cross-price elasticities are derived. It enters only through the explicit price branch and is not included among the contextual covariates. \\

\path|on_promo|
& Indicator that the UPC is on promotion in the corresponding store-week. Controls promotional status, which is strongly related to price and relevant for isolating price effects. \\

\path|promo_intensity_store_week|
& Share of UPCs on promotion in the same store-week. Summarizes the store-wide promotional environment beyond the focal UPC's own promotion status. \\

\bottomrule
\end{tabularx}
\end{table}


\begin{table}[p]
\centering
\scriptsize
\caption{Final dataset schema: temporal, seasonal, and lifecycle features.}
\label{tab:final-dl-schema-temporal}
\begin{tabularx}{\textwidth}{@{}p{0.43\textwidth}X@{}}
\toprule
\textbf{Variable(s)} & \textbf{Definition and motivation} \\
\midrule

\path|week_id|
& Raw calendar week identifier. Retained for ordering, traceability, and downstream joins. \\

\path|week_rank|
& Sequential gap-free time index constructed after calendar completion. Captures smooth global time trends. \\

\begin{minipage}[t]{\linewidth}
\path|sin_52|\\
\path|cos_52|
\end{minipage}
& Annual Fourier seasonality terms. Capture smooth yearly demand patterns without high-dimensional week fixed effects. \\

\begin{minipage}[t]{\linewidth}
\path|sin_26|\\
\path|cos_26|
\end{minipage}
& Semiannual Fourier seasonality terms. Represent medium-horizon seasonal cycles beyond annual seasonality. \\

\begin{minipage}[t]{\linewidth}
\path|sin_13|\\
\path|cos_13|
\end{minipage}
& Quarterly Fourier seasonality terms. Capture shorter recurrent seasonal patterns in demand. \\

\path|weeks_since_first_seen_upc|
& Weeks since the UPC first appears in the data. Captures global product lifecycle and product maturity. \\

\path|weeks_since_first_seen_store_upc|
& Weeks since the UPC first appears in a given store. Captures local lifecycle effects such as rollout, stabilization, or decline within store. \\

\bottomrule
\end{tabularx}
\end{table}


\begin{table}[p]
\centering
\scriptsize
\caption{Final dataset schema: demand-memory features and missingness indicators.}
\label{tab:final-dl-schema-demand-memory}
\begin{tabularx}{\textwidth}{@{}p{0.43\textwidth}X@{}}
\toprule
\textbf{Variable(s)} & \textbf{Definition and motivation} \\
\midrule

\begin{minipage}[t]{\linewidth}
\path|lag_1_log_liters_sold|\\
\path|lag_2_log_liters_sold|\\
\path|lag_4_log_liters_sold|
\end{minipage}
& Lagged log demand for the same store-UPC series at 1, 2, and 4 weeks. Captures short- and medium-run persistence in demand. \\

\begin{minipage}[t]{\linewidth}
\path|rolling_mean_4_log_liters_sold|\\
\path|rolling_mean_13_log_liters_sold|
\end{minipage}
& Four- and thirteen-week rolling averages of log demand. Summarize recent and quarterly local demand levels while smoothing week-level noise. \\

\begin{minipage}[t]{\linewidth}
\path|miss_lag_1|\\
\path|miss_lag_2|\\
\path|miss_lag_4|
\end{minipage}
& Missingness indicators for lagged demand features. Identify cases in which lagged demand information is unavailable. \\

\begin{minipage}[t]{\linewidth}
\path|miss_roll_4|\\
\path|miss_roll_13|
\end{minipage}
& Missingness indicators for rolling demand features. Identify cases in which rolling demand summaries cannot be computed because the required historical window is unavailable. \\

\bottomrule
\end{tabularx}
\end{table}


\begin{table}[p]
\centering
\scriptsize
\caption{Final dataset schema: dynamic competitive-neighborhood features.}
\label{tab:final-dl-schema-competitive-dynamic}
\begin{tabularx}{\textwidth}{@{}p{0.43\textwidth}X@{}}
\toprule
\textbf{Variable(s)} & \textbf{Definition and motivation} \\
\midrule

\path|n_neighbors_sw_cat|
& Number of other UPCs observed in the same store-week-category. Measures the size of the active category-level competitive set. \\

\path|neighbor_promo_share_sw_cat|
& Share of category neighbors on promotion in the same store-week. Captures promotional pressure from rival UPCs within the category. \\

\path|n_same_brand_neighbors_sw_cat|
& Number of same-brand neighbors in the same store-week-category. Measures within-category same-brand competitive intensity. \\

\path|same_brand_neighbor_promo_share_sw_cat|
& Share of same-brand neighbors on promotion. Captures promotional pressure among close same-brand substitutes. \\

\path|lag1_neighbor_mean_log_liters_sold|
& Mean lag-1 log demand across valid category neighbors. Summarizes recent demand conditions in the local competitive set. \\

\path|lag1_same_brand_neighbor_mean_log_liters_sold|
& Mean lag-1 log demand across valid same-brand neighbors. Captures recent demand conditions among same-brand neighboring UPCs. \\

\path|roll4_neighbor_mean_log_liters_sold|
& Mean four-week rolling log demand across valid category neighbors. Provides a smoother measure of recent category-level demand conditions. \\

\begin{minipage}[t]{\linewidth}
\path|miss_lag1_neighbor_mean_log_liters_sold|\\
\path|miss_roll4_neighbor_mean_log_liters_sold|\\
\path|miss_lag1_same_brand_neighbor_mean_log_liters_sold|
\end{minipage}
& Missingness indicators for neighbor-based demand aggregates. Identify cases in which neighbor demand information is unavailable. 
\\

\bottomrule
\end{tabularx}
\end{table}


\begin{table}[p]
\centering
\scriptsize
\caption{Final dataset schema: static assortment and neighbor-entry features.}
\label{tab:final-dl-schema-competitive-static}
\begin{tabularx}{\textwidth}{@{}p{0.43\textwidth}X@{}}
\toprule
\textbf{Variable(s)} & \textbf{Definition and motivation} \\
\midrule

\path|store_category_upc_count_static|
& Distinct UPC count in the same store-category. Controls for product range and fixed competitive scale at the store-category level. \\

\path|same_brand_upc_count_store_cat_static|
& Same-brand UPC count in the same store-category, excluding the focal UPC. Measures within-category same-brand variety and potential same-brand substitution. \\

\begin{minipage}[t]{\linewidth}
\path|n_new_neighbors_13w|\\
\path|share_new_neighbors_13w|
\end{minipage}
& Count and share of neighbors that are new in-store, defined as UPCs with at most 13 weeks since first appearance in that store. Capture changing competitive sets due to product introductions, rollouts, or activation. \\

\bottomrule
\end{tabularx}
\end{table}

\normalsize
\clearpage

\bibliography{iclr_conference}
\bibliographystyle{iclr_conference}

\end{document}